\definecolor{darkblue}{RGB}{0, 0, 139}
\theoremstyle{plain}
\newtheorem{theorem}{Theorem}[section]
\newtheorem{proposition}[theorem]{Proposition}
\newtheorem{lemma}[theorem]{Lemma}
\newtheorem{corollary}[theorem]{Corollary}
\theoremstyle{definition}
\newtheorem{definition}[theorem]{Definition}
\newtheorem{assumption}[theorem]{Assumption}
\theoremstyle{remark}
\newtheorem{remark}[theorem]{Remark}
\icmltitlerunning{Homotopy Dynamics}
\begin{document}

\twocolumn[
\icmltitle{Learn Singularly Perturbed Solutions via Homotopy Dynamics}



\icmlsetsymbol{equal}{*}

\begin{icmlauthorlist}
\icmlauthor{Chuqi Chen}{yyy}
\icmlauthor{Yahong Yang}{comp}
\icmlauthor{Yang Xiang}{yyy,sch}
\icmlauthor{Wenrui Hao}{comp}
\end{icmlauthorlist}

\icmlaffiliation{yyy}{Department of Mathematics, The Hong Kong University of Science and Technology, Clear Water Bay, Hong Kong SAR, China}
\icmlaffiliation{comp}{Department of Mathematics, The Pennsylvania State University, PA, USA}
\icmlaffiliation{sch}{Algorithms of Machine Learning and Autonomous Driving Research Lab, HKUST
Shenzhen-Hong Kong Collaborative Innovation Research
Institute, Futian, Shenzhen, China \\}

\icmlcorrespondingauthor{Yahong Yang}{yxy5498@psu.edu}

\icmlkeywords{Physics-informed neural networks, scientific machine learning, optimization, homotopy dynamics}
\vskip 0.3in
]



\printAffiliationsAndNotice{}  



\newcommand{\D}{\mathrm{d}}
\newcommand{\vx}{\boldsymbol{x}}
\newcommand{\vT}{\boldsymbol{T}}
\newcommand{\vA}{\boldsymbol{A}}
\newcommand{\vH}{\boldsymbol{H}}
\newcommand{\vl}{\boldsymbol{l}}
\newcommand{\vS}{\boldsymbol{S}}
\newcommand{\vD}{\boldsymbol{D}}
\newcommand{\vK}{\boldsymbol{K}}
\newcommand{\sR}{\mathbb{R}}
\newcommand{\vy}{\boldsymbol{y}}
\newcommand{\vtheta}{\boldsymbol{\theta}}
\newcommand{\R}{\mathbb{R}}
\newcommand{\eps}{\varepsilon}
\newcommand{\Dc}{\mathcal D}
\newcommand{\Kc}{\mathcal K}
\newcommand{\Bc}{\mathcal B}
\newcommand{\Lc}{\mathcal L}
\newcommand{\Wstar}{\mathcal W_\star}
\newcommand{\F}{\mathcal F}
\newcommand{\HL}{H_{L} }
\newcommand{\epsLoc}{\varepsilon_{\textup{loc}}}
\newcommand{\wloc}{w_{\textup{loc}}}
\newcommand{\RLoc}{R_{\textup{loc}}}
\newcommand{\Neps}{\mathcal N_{\epsLoc}(w_\star)}
\newcommand{\N}{\mathcal N}
\newcommand{\dist}{\textup{dist}}
\newcommand{\PL}{P\L$^{\star}$}
\newcommand{\A}{\mathcal A}
\newcommand{\lamMin}{\lambda_{\textup{min}}}
\newcommand{\lamMax}{\lambda_{\textup{max}}}
\newcommand{\nres}{n_\textup{res}}
\newcommand{\nbc}{n_{\textup{bc}}}
\newcommand{\lbfgs}{L-BFGS}
\newcommand{\al}{Adam+\lbfgs}
\newcommand{\aln}{Adam+\lbfgs+NNCG}
\newcommand{\alg}{Adam+\lbfgs+GD}
\newcommand{\bigO}{\mathcal O}

\renewcommand{\algorithmiccomment}[1]{\hfill \(\triangleright\) #1}

\newcommand{\pnote}[1]{}
\renewcommand{\pnote}[1]{\textcolor{red}{\textbf{[PR: #1]}}}








\begin{abstract}
Solving partial differential equations (PDEs) using neural networks has become a central focus in scientific machine learning. Training neural networks for singularly perturbed problems is particularly challenging due to certain parameters in the PDEs that introduce near-singularities in the loss function. In this study, we overcome this challenge by introducing a novel method based on homotopy dynamics to effectively manipulate these parameters.  From a theoretical perspective, we analyze the effects of these parameters on training difficulty in these singularly perturbed problems and establish the convergence of the proposed homotopy dynamics method. Experimentally, we demonstrate that our approach significantly accelerates convergence and improves the accuracy of these singularly perturbed problems. These findings present an efficient optimization strategy leveraging homotopy dynamics, offering a robust framework to extend the applicability of neural networks for solving singularly perturbed differential equations.  

\end{abstract}

\section{Introduction}
The study of Partial Differential Equations (PDEs) serves as a cornerstone for numerous scientific and engineering disciplines. In recent years, leveraging neural network architectures to solve PDEs has gained significant attention, particularly in handling complex domains and incorporating empirical data. Theoretically, neural networks have the potential to overcome the curse of dimensionality when solving PDEs \cite{han2018solving,siegel2020approximation,lu2021priori,yang2022approximation,haonewton}. However, despite these advancements, numerically solving such fundamental physical equations remains a challenging task.
Existing neural network-based PDE solvers can be broadly divided into two categories. The first is solution approximation, which focuses on directly approximating PDE solutions using methods such as PINNs~\cite{raissi2019unified,karniadakis2021physicsinformed,cuomo2022scientific}, the Deep Ritz Method~\cite{e2018deep}, and random feature models~\cite{chen2022bridging,dong2023method,sun2024local,chen2024quantifying}. The second category, operator learning, aims to approximate the input-to-solution mapping, with representative methods including DeepONet~\cite{lu2021deeponet} and FNO~\cite{li2021fourier}, as well as various extensions for broader operator classes~\cite{he2024mgno,lan2023dosnet,li2023phase,geng2024deep}.


\begin{figure*}[htbp!]
    \centering
    \vspace{-5pt}  
    \includegraphics[scale=0.45]{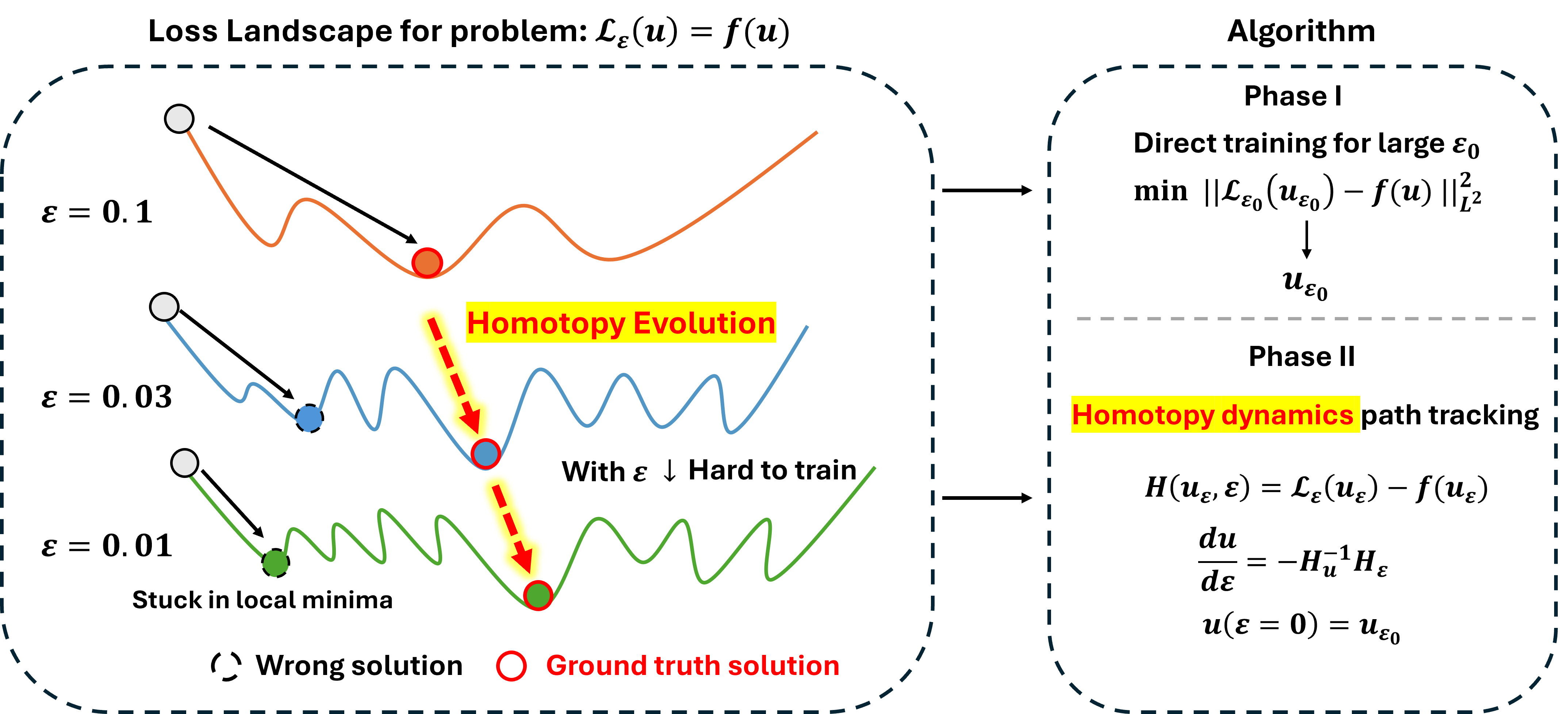}
    \vspace{-5pt}  
    \caption{Framework of homotopy dynamics for solving singularly perturbed problems.}
    \label{fig:homotopy_dynamics_framework}
    \vspace{-5pt}
\end{figure*}

However, the optimization challenges in solving PDEs significantly limit the applicability and development of neural network-based methods. Studies have shown that the loss functions for solving PDEs are often difficult to minimize, even in simple scenarios \cite{krishnapriyan2021characterizing,rathore2024challenges,xu2024overview,chen2024quantifying,chen2024automatic}. For example, small diffusion coefficients in the Allen–Cahn equation~\cite{allen1975coherent}, small viscosity terms in the Burgers equation~\cite{burgers1948mathematical}, and large wave numbers in the Helmholtz equation~\cite{hilbert1985methods}. In these types of equations, the parameters significantly influence the solution behavior. In Allen–Cahn and Burgers equations, decreasing the parameter sharpens the solution, often resulting in near-singular structures. In the Helmholtz equation, increasing the parameter induces high-frequency oscillations. These effects complicate the loss landscape, making optimization challenging and often causing slow convergence, inaccurate solutions, or even divergence.

The root of this challenge lies in the highly complex energy landscape of the loss function near singularities, which significantly exacerbates optimization difficulties~\cite{karniadakis2021physicsinformed,xu2024overview}. To address these challenges, two main strategies have been proposed. The first strategy is resampling, which involves introducing additional collocation points in regions with low regularity to better capture the solution's complexity~\cite{wight2020solving,gao2024failure,ZHANG2025113561}. However, resampling-based methods typically require a large number of sample points, leading to substantial memory consumption, and the sampling process becomes increasingly complicated in high-dimensional settings. The second strategy is the design of multiscale neural network architectures~\cite{wang2020multi,liu2020multi,LIU2024112944,hao2024multiscale,wang2021eigenvector,huang2025frequency}. These approaches generally require certain a priori knowledge of the solution properties, impose specific constraints on the network design, and are highly sensitive to the selection of hyperparameters.

In this paper, we introduce a novel approach based on homotopy dynamics to gradually reshape the complex energy landscape with respect to a specific coefficient. Rather than directly computing solutions near singularities, we leverage homotopy dynamics to trace a solution path that approximates them more effectively.
More specifically, we investigate the training challenges introduced by a parameter $\varepsilon$ in the PDE residual term within the loss functions.
As $\varepsilon$ decreases, the problem becomes more significantly difficult to solve. To understand this effect, we provide a theoretical analysis of how $\varepsilon$ influences the convergence of the training process. To address this issue, we propose a novel method called \textit{Homotopy Dynamics}. The key idea is to first train the neural network on PDEs with a large $\varepsilon$, where the problem is easier to learn and training is more efficient. Then, we gradually and adaptively adjust the neural network according to the evolution of the homotopy dynamics, guiding $\varepsilon$ toward its target value (as illustrated in \cref{fig:homotopy_dynamics_framework}). Although the homotopy approach has been used to train neural networks \cite{chen2019homotopy,yang2025homotopy}, this work is the first to apply homotopy dynamics to sharp interface problems in PDEs through the parameter $\varepsilon$.

A related idea appears in~\cite{krishnapriyan2021characterizing} as Curriculum PINN Regularization, where PDE parameters are observed to influence PINN performance, though without theoretical analysis. In contrast, our work is the first to theoretically demonstrate that in singularly perturbed problems, smaller $\varepsilon$ values lead to greater training difficulty (\textbf{Theorem 1}). While both approaches share the curriculum-style motivation, our method differs in design and rigor: we construct a continuous homotopy path in parameter space with convergence guarantees and introduce a principled strategy for choosing the homotopy step size $\Delta\varepsilon$ (\textbf{Theorem 2}), which is absent in~\cite{krishnapriyan2021characterizing}.

\textbf{Contributions.} Our key contributions are summarized as follows:
\vspace{-4pt}
\begin{itemize}[itemsep=0pt, parsep=1pt, topsep=2pt, partopsep=0pt, leftmargin=*]
\item We propose \textit{Homotopy Dynamics}, a novel method for solving singularly perturbed PDEs with neural networks, achieving improved training performance (\cref{sec: Method}).
\item We provide a theoretical analysis of how the PDE parameter $\varepsilon$ affects training difficulty, and establish the convergence of our method (\cref{sec:theory}).
\item We validate the method on diverse problems, including the Allen–Cahn equation, high-dimensional Helmholtz equation, and operator learning for Burgers' equation (\cref{sec:Experiments}).
\end{itemize}

\section{Problem Setup}

We begin by introducing the singularly perturbed problems studied in this work, followed by the neural network-based solution approach and the training challenges that motivate our method.

\subsection{Singularly perturbed  Problems}

The form of the singularly perturbed   problem is defined as follows: 
\begin{equation}
\left\{\begin{array}{l}
\mathcal{L}_\varepsilon u=f(u), \quad  \text{in } \Omega, \\
\Bc u = g(x), \quad \text{on } \partial \Omega,
\end{array}\right.\label{eq:gen_pde}
\end{equation}
where $\mathcal{L}_\varepsilon$ is a differential operator defining the PDE with certain parameters, $\Bc$ is an operator associated with the boundary and/or initial conditions, and $\Omega \subseteq \R^d$.  In the considered PDEs, the parameter $\varepsilon$ governs the complexity of the solution, with smaller values generally leading to more challenging behaviors. For example, in the Allen–Cahn equation~\eqref{eq:1d_allen_cahn}, $\varepsilon$ represents the interfacial width parameter, where smaller $\varepsilon$ results in sharper transition layers. In the Burgers equation~\eqref{eq:1D_Burgers}, $\varepsilon$ corresponds to the viscosity coefficient, with small values leading to steep gradients or shock-like structures. In the Helmholtz equation~\eqref{eqn:Helmholtz}, $\varepsilon$ is inversely related to the wave number, and decreasing $\varepsilon$ yields higher-frequency oscillations.

In all cases, as $\varepsilon$ becomes small, the solution exhibits increased complexity—whether through sharp interfaces, steep gradients, or high-frequency structures—posing significant challenges for neural network-based solvers. More details will be provided in the following section.

\subsection{Neural Networks for
Solving PDEs}

In this section, we focus on solution approximation rather than operator learning for simplicity, specifically using a neural network to approximate the PDE solution. In Section \ref{sec:Experiments}, we will demonstrate that our \textit{Homotopy Dynamics} can also generalize to the operator learning case. The PDE problem is typically reformulated as the following non-linear least-squares problem, aiming to determine the parameters $\boldsymbol{\theta}$ of the neural network $u(x;{\boldsymbol{\theta}})$ (commonly a multi-layer perceptron, MLP):
\begin{align}
    \underset{\boldsymbol{\theta} \in \R^p}{\mbox{min}}~L(\boldsymbol{\theta}) \coloneqq  & \underbrace{\frac{1}{2\nres}\sum_{i=1}^{\nres}\left(\Lc_\varepsilon u(\vx_r^i; \boldsymbol{\theta})-f(u(\vx_r^i;\theta))\right)^2}_{L_{\text{res}}}\nonumber \\ &+\lambda\underbrace{\frac{1}{2\nbc}\sum^{\nbc}_{i=1}\left(\Bc u(\vx_b^j;\vtheta)-g(\vx_b^j)\right)^2}_{ L_{\text{bc}}}. \label{loss}
\end{align}

Here $L_{\text{res}}$ is the PDE residual loss, $ L_{\text{bc}}$ is the boundary loss and $\lambda$ is a constant used to balance these two terms. The sets $\{\vx_r^i\}^{\nres}_{i=1}$ represent  represent the interior sample points, and $\{\vx^j_b\}^{\nbc}_{j=1}$ represent  the boundary sample points. 
We also introduce the \(\ell_2\) relative error (L2RE) to evaluate the discrepancy between the neural network solution and the ground truth, defined as
\begin{align*}
    \mathrm{L2RE} = \frac{\|u_{\vtheta} - u^{*}\|_2}{\|u^{*}\|_2},
\end{align*}where \(u_{\boldsymbol{\theta}}\) is the neural network solution and \(u^{*}\) is the ground truth.



\subsection{Challenges in Training Neural Networks}

In this paper, we consider the following singularly perturbed elliptic problem:
\begin{equation}
\begin{cases}
-\varepsilon^2 \Delta u(x)  = f(u), & x \in \Omega, \\
u(x) = g(x), & x \in \partial \Omega,
\end{cases}
\label{eq:gen_pde}
\end{equation}
where $\varepsilon$ is a problem parameter that influences both the structure of the solution and the difficulty of training neural network solvers. As a representative example, we focus on the steady-state Allen--Cahn equation in one spatial dimension:
\begin{equation}
\begin{cases}
-\varepsilon^2 u^{\prime\prime}(x) =  u^3 - u , & x \in [0,1], \\
u(0) = -1, \quad u(1) = 1,
\end{cases}
\label{eq:1d_allen_cahn}
\end{equation}
where the parameter $\varepsilon$ controls the width of the internal interface. 
As $\varepsilon$ decreases, the interface becomes increasingly sharp, resulting in a solution with higher gradients near the transition region. The analytic steady-state solution of this problem is given by
$$
    u(x) = \tanh\left(\frac{x - 0.5}{\sqrt{2}\varepsilon}\right),
$$
where the interface is centered at $x = 0.5$. 
As shown in \cref{fig:1d_allen_cahn_homo_result}, the solution becomes sharper as $\varepsilon$ becomes smaller.

To show the challenges in the optimization problem defined in (\ref{loss}), we present the training curves for varying values of \(\varepsilon\) in \cref{fig:1d_allen_cahn_pinn_loss}. As \(\varepsilon\) decreases, training errors increase. This is due to the significantly increased training difficulty and slower convergence for smaller \(\varepsilon\). In the subsequent sections, we analyze the underlying reasons for this phenomenon and introduce a homotopy dynamics-based approach to address the challenge.

\begin{figure}[t]
    \centering
    \includegraphics[scale=0.45]{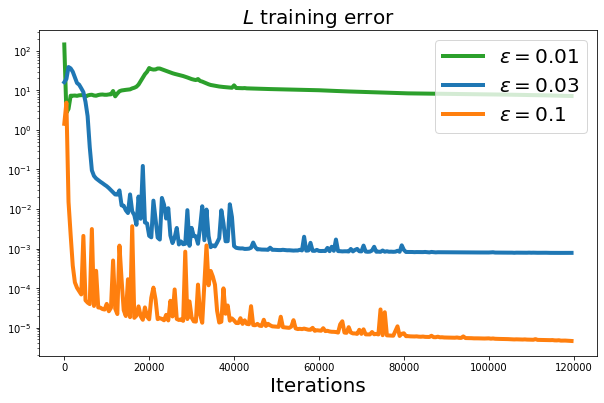}
    \caption{Training curves for different values of \(\varepsilon\) in solving the 1D Allen-Cahn steady-state equation. As \(\varepsilon\) decreases, the training error increases, indicating that the training process becomes progressively more difficult.}
    \label{fig:1d_allen_cahn_pinn_loss}
\end{figure}


\section{Homotopy Dynamics}
\label{sec: Method}
To address training difficulties in neural networks for singularly perturbed problems, we introduce a novel approach termed \textit{homotopy dynamics}.

\subsection{Homotopy Path Tracking}

First, we introduce the homotopy function below:
\begin{equation}
    H(u,\varepsilon) = \mathcal{L}_\varepsilon u - f(u) \equiv 0,
\end{equation}
where $\varepsilon$ is the parameter in the PDEs. Specifically, this formulation represents the PDE problem $\mathcal{L}_\varepsilon u = f(u)$.  In this context, $\varepsilon$ is treated as a path-tracking parameter. At $\varepsilon = \varepsilon_0$, we assume that the solutions to \( H(u_0, \varepsilon_0) = 0 \) are either known or can be easily approximated by neural networks. These solutions are referred to as the starting points. At $\varepsilon = \varepsilon^*$, the original system we aim to solve is recovered, which is referred to as the target system. Therefore, solving the target system involves tracking the solutions of \( H(u, \varepsilon) = 0 \) from \( \varepsilon = \varepsilon_0 \), where the solutions are known, to \( \varepsilon = \varepsilon^* \), where the solutions are sought.

The process of path tracking between $\varepsilon_0$ and $\varepsilon^*$ is governed by solving the Davidenko differential equation:
\begin{equation}
\frac{\D H(u(\varepsilon), \varepsilon)}{\D \varepsilon}=\frac{\partial H(u(\varepsilon),\varepsilon)}{\partial u} \frac{\D u(\varepsilon)}{\D \varepsilon}+\frac{\partial H(u(\varepsilon), \varepsilon)}{\partial \varepsilon}= 0,
\label{eq:homotopy_dynamics}
\end{equation}
with the initial condition $u(\varepsilon_0)=u_0$.
Thus, path tracking reduces to numerically solving an initial value problem, with the starting points acting as the initial conditions. Additionally, the boundary condition in \eqref{eq:gen_pde} should be taken into account when solving the initial value problem numerically.

\subsection{Incorporating Homotopy Dynamics into Neural Network Training}

To enhance the neural network training process, we incorporate homotopy dynamics by gradually transitioning the network from an easier problem (with a larger \( \varepsilon_0 \)) to the original target problem (with \( \varepsilon^* \)). This approach helps mitigate the challenges associated with training networks for problems involving small values of \( \varepsilon \), where solutions become increasingly sharp or oscillation and harder to compute. Specifically, 
 we denote the neural network solution for \eqref{eq:gen_pde} as $u(x;{\vtheta}(\varepsilon))$. The homotopy path tracking for training neural networks can then be refined as:
\begin{equation}
    H_{u}\nabla_{{\vtheta}}u \cdot \frac{\D{\vtheta}(\varepsilon)}{\D \varepsilon} + H_{\varepsilon} = 0,
\label{eq.homo_dynamics}
\end{equation}
where $H_{u} = \frac{\partial H}{\partial u}$, $H_{\varepsilon} = \frac{\partial H}{\partial \varepsilon}$ and $\nabla_{{\vtheta}}u$ represents the Jacobian with respect to the neural network parameters ${\vtheta}$. Thus we can derive the homotopy dynamics system as:
\begin{equation}
 \frac{\D {\vtheta}(\varepsilon)}{\D \varepsilon} = -(H_{u}\nabla_{{\vtheta}}u)^{\dagger}H_{\varepsilon}, \quad \varepsilon \in [\varepsilon_0,\varepsilon^*],
\label{eq:homotopy_pinn}
\end{equation}
with the initial condition $\vtheta(\varepsilon_0)=\vtheta_0$ and ${\dagger}$ stands for Moore–Penrose inverse \cite{ben2006generalized}.
Thus, to solve the singularly perturbed problem \eqref{eq:gen_pde} where \( \varepsilon \) is small, we can first solve \eqref{eq:gen_pde} with a large \( \varepsilon \) using the loss function \eqref{loss}. Then, by following the homotopy dynamics path tracking \eqref{eq:homotopy_pinn}, we can progressively obtain the solution for smaller values of \( \varepsilon \), ultimately solving the singularly perturbed problem.

In particular, path tracking in homotopy dynamics reduces to solving initial value problems numerically, with the start points serving as the initial conditions. For different neural network architectures, we propose two strategies, which are summarized in {\bf Algorithm 1.}

One is to solve the initial value problem by using the forward Euler method, as follows:
\begin{equation}
{\vtheta}(\varepsilon_k) = 
          {\vtheta}(\varepsilon_{k-1})-\Delta \varepsilon_k\nabla_{{\vtheta}}u(\varepsilon_{k-1})^{\dagger}H_u^{-1}H_{\varepsilon},
\end{equation}
where $\Delta \varepsilon_k = \varepsilon_{k}-\varepsilon_{k-1}$. This approach is effective for small neural networks, as the pseudo-inverse is easy to compute.

The other approach is to introduce the Homotopy Loss in the optimization, formulated as:
\begin{align}
\underset{\boldsymbol{{\vtheta}({\varepsilon_k})} \in \R^p}{\mbox{min}}~L_{\text{Hom}}(\boldsymbol{{\vtheta}}({\varepsilon_k})) \coloneqq &  L_{H} +\lambda L_{\text{bc}}+\alpha L_{H_{\varepsilon}},
\end{align}
where $L_{H}$ is defined in Eq.~(\ref{eq:hom_or_loss}), and $L_{H_{\varepsilon}}$ is the loss function from Homotopy Dynamics, which is \[L_{H_{\varepsilon}} =  \textstyle\frac{1}{2\nres}\textstyle\sum^{\nres}\limits_{i=1}\left(H_{u}(u_{{\vtheta}({\varepsilon_k})}(\vx_r^i),\varepsilon)\frac{\Delta u_k}{\Delta \varepsilon_k} + H_{\varepsilon}(u_{\vtheta}(\vx_r^i),\varepsilon)\right)^2.\]
This approach is suitable for large neural networks, as it does not require the computation of the pseudo-inverse, and $\Delta u_k = u_{{\vtheta}({\varepsilon_k})} - u_{{\vtheta}({\varepsilon_{k-1}})}$.


\begin{algorithm}[H]
    \centering
    \footnotesize
    \caption{Homotopy Dynamics Path Tracking}
    \label{alg1-FEuler}
    \begin{algorithmic}
    \INPUT{tolerance $\tau$, list of parameters $\varepsilon_0,\varepsilon_1,\ldots,\varepsilon_n$}
    
    \STATE{\color{blue}\textbf{Phase I}: \textbf{Directly train NN for large $\varepsilon_0$}} 
    \WHILE{$L({\vtheta}(\varepsilon_0)) > \tau$}
        \STATE{$\min L({\vtheta}(\varepsilon_0))$}
    \ENDWHILE
    
    \STATE{\color{blue}{\textbf{Phase II}}: \textbf{Homotopy dynamics path tracking}} 
    \FOR{$k = 1,\dots,n$}
        \STATE{$\Delta \varepsilon_k = \varepsilon_{k}-\varepsilon_{k-1}$}
        
        \fbox{%
        \parbox{0.93\linewidth}{%
        \textbf{Strategy 1. Numerical solution via Forward Euler:} 
        ${\vtheta}(\varepsilon_k) = 
        {\vtheta}(\varepsilon_{k-1}) - \Delta \varepsilon_k \nabla_{{\vtheta}}u(\varepsilon_{k-1})^{\dagger} H_u^{-1} H_{\varepsilon}$
        }%
        }
        
        \vspace{3pt}
        
        \fbox{%
        \parbox{0.93\linewidth}{%
        \textbf{Strategy 2. Optimization using homotopy loss:}  \\
        \textbf{while} $L_{\text{Hom}}(\boldsymbol{\vtheta}(\varepsilon_k)) > \tau$ \textbf{ do }  \\
        $\min L_{\text{Hom}}(\boldsymbol{\vtheta}(\varepsilon_k))$ \\
        \textbf{end while} 
        }
        }
        
    \ENDFOR
    \OUTPUT{$u_{{\vtheta}(\varepsilon_n)}$}
    \end{algorithmic}
\end{algorithm}

If $L_H$ and $\lambda L_{bc}$ are omitted in strategy~2, then strategy~2 can be viewed as an alternative approach to solving the linear system given in Eq.~(\ref{eq.homo_dynamics}). However, for certain PDEs or larger neural networks, directly solving Eq.~(\ref{eq.homo_dynamics}) is unstable because the term $H_u \nabla_\theta u$ contains many small singular values, causing conventional methods (e.g., using SVD) to incur large errors. Therefore, we opt to solve an optimization problem in the traditional manner. Within this framework, strategy~2 follows the same dynamic process as strategy~1, yielding more stable training. Including $L_H$ and $\lambda L_{bc}$ ensures that the obtained solution satisfies the target PDE. Conversely, even if $H(u, \varepsilon) = \text{Const} \neq 0$, the solutions still follow the same homotopy dynamics since they share the same $L_{H_\varepsilon}$.

\paragraph{Example: 1D Allen-Cahn steady-state equation.}
We demonstrate our proposed method on the one-dimensional Allen-Cahn steady-state equation by defining the following homotopy function:
\begin{equation}
    H(u_{\vtheta},\varepsilon) = \varepsilon^2 u_{\vtheta}^{\prime \prime}(x) + {u_{\vtheta}^3 - u_{\vtheta}} \equiv 0.
\end{equation}
Following the homotopy dynamics in Eq. \eqref{eq:homotopy_pinn}, we set the initial value at $\varepsilon = 0.1$ and gradually decrease it to the final value $\varepsilon_n = 0.01$. The initial solution, ${\vtheta(\varepsilon_0)}$, is obtained using the standard training process by directly minimizing \eqref{loss}. The results and the evolution process are presented in \cref{tab:1D_Allen_loss_l2re_comparison} and \cref{fig:1d_allen_cahn_homo_result}. These results show that when $\varepsilon$ is large, the original training method achieves a relatively small error, leading to an accurate solution. However, as $\varepsilon$ decreases, the error increases, which reduces the accuracy of the solution. In contrast, the homotopy dynamics-based approach maintains accuracy effectively as $\varepsilon$ decreases.

\begin{table}[t]
\setlength{\abovecaptionskip}{2pt}
\setlength{\belowcaptionskip}{2pt}
\renewcommand{\arraystretch}{0.9}
\caption{Training loss and $L^2$ error (L2RE) for classical training vs. homotopy dynamics under different $\varepsilon$. Homotopy dynamics achieves consistently lower loss and error.}
\centering
\scriptsize
\begin{tabular}{|c|cc|cc|cc|}
\hline
\multirow{2}{*}{} & \multicolumn{2}{c|}{$\varepsilon = 0.1$} & \multicolumn{2}{c|}{$\varepsilon = 0.03$} & \multicolumn{2}{c|}{$\varepsilon = 0.01$} \\ \cline{2-7}
                  & Loss & L2RE & Loss & L2RE & Loss & L2RE \\ \hline
Classical         & 5.00e-6 & 1.71e-2 & 7.76e-4 & 1.11 & 7.21 & 8.17e-1 \\ \hline
Homotopy          & 5.00e-6 & 1.71e-2 & 7.45e-8 & 9.83e-3 & \textbf{4.63e-8} & \textbf{8.08e-3} \\ \hline
\end{tabular}
\label{tab:1D_Allen_loss_l2re_comparison}
\end{table}

\begin{figure}[t]
    \centering
    \includegraphics[scale=0.45]{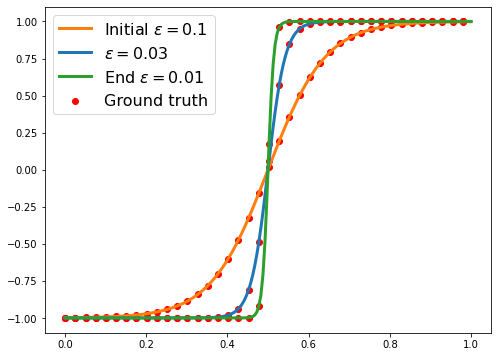}
    \caption{Evolution of the Homotopy dynamics for steady state 1D Allen-Cahn equation. The L2RE for $\varepsilon=0.01$ is $8.08e-3$.}
    \label{fig:1d_allen_cahn_homo_result}
\end{figure}

\section{Theoretical  analysis}
\label{sec:theory}
In this section, we provide theoretical support for homotopy dynamics. In the first part, we demonstrate that for certain PDEs with small parameters, direct training using PINN methods is highly challenging. This analysis is based on the neural tangent kernel (NTK) framework \cite{allen2019convergence}. In the second part, we show that homotopy dynamics will converge to the solution with a small parameter \( \varepsilon \), provided that the dynamic step size is sufficiently small and the initial solution has been well learned by the neural network.
\subsection{Challenges in Training Neural Network with Small Certain Parameters}
Let us consider training neural networks without homotopy dynamics. The corresponding loss function can be expressed as  
\begin{align}
    L_H(\vtheta) = \frac{1}{2n} \sum_{i=1}^n H^2(u_{\vtheta}(\vx_i),\varepsilon),
\label{eq:hom_or_loss}
\end{align}  
where $\{\vx_i\}_{i=1}^n$ represents the training data used to optimize the neural network. Here, we assume that the parameter $\varepsilon$ in the PDE appears only in the interior terms and not in the boundary conditions. Therefore, in this section, we omit the effect of boundary conditions, as the behavior at the boundary remains unchanged for any given $\varepsilon$.  

Furthermore, to simplify the notation, we use $n$ instead of $\nres$ and denote $\vx_r^i$ simply as $\vx_i$ comparing with Eq. \eqref{loss}.

In the classical approach, such a loss function is optimized using gradient descent, stochastic gradient descent, or Adam. Considering the training process of gradient descent in its continuous form, it can be expressed as:
\begin{align}
    \frac{\D \vtheta}{\D t} 
    &= -\nabla_{\vtheta}L_H(\vtheta) 
    \notag\\&= -\frac{1}{n} \sum_{i=1}^n H(u_{\vtheta}(\vx_i),\varepsilon)\delta_{u_{\vtheta}}H(u_{\vtheta}(\vx_i),\varepsilon)\nabla_{\vtheta}u_{\vtheta}(\vx_i), \notag \\
    &= -\frac{1}{n} \vH(u_{\vtheta}(\vx),\varepsilon) \cdot \vS,
\end{align}
where $t$ in this section is the time of the gradient descent flow, $\delta_{u_{\vtheta}}$ is the functional variational corresponding to $u_{\vtheta}$, and
\begin{align}
    \vH(u_{\vtheta}(\vx),\varepsilon) 
    &:= \big[ H(u_{\vtheta}(\vx_i),\varepsilon)\delta_{u_{\vtheta}}H(u_{\vtheta}(\vx_i),\varepsilon)\big]_{i=1}^n \notag \\
    &= \vl_\varepsilon \cdot \vD_\varepsilon,
\end{align}
and
\begin{align}
    \vl_\varepsilon := \big[ H(u_{\vtheta}(\vx_i,\vtheta),\varepsilon) \big]_{i=1}^n \in \sR^{1 \times n},  \vD_\varepsilon \in \sR^{n \times n}
\end{align}where $\vD_\varepsilon$ represents the discrete form of the variation of PDEs in different scenarios. Furthermore,
\begin{align}
    \vS = \big[ \nabla_{\vtheta}u_{\vtheta}(\vx_1),\dots, \nabla_{\vtheta}u_{\vtheta}(\vx_n) \big].
\end{align}

Therefore, we obtain  
\begin{align}
    \frac{\D L_H(\vtheta)}{\D t} &= \nabla_{\vtheta}L_H(\vtheta) \frac{\D \vtheta}{\D t} \notag \\
    &= -\frac{1}{n^2} \vH(u_{\vtheta}(\vx),\varepsilon) \vS \vS^{\top} \vH^{\top}(u_{\vtheta}(\vx),\varepsilon) \notag \\
    &= -\frac{1}{n^2} \vl_\varepsilon \vD_\varepsilon \vS \vS^{\top} \vD_\varepsilon^{\top} \vl_\varepsilon^{\top}.
\end{align}
Hence, the kernel of the gradient descent update is given by  
\begin{align}
    \vK_\varepsilon := \vD_\varepsilon \vS \vS^{\top} \vD_\varepsilon^{\top}.
\end{align}

The following theorem provides an upper bound for the smallest eigenvalue of the kernel and its role in the gradient descent dynamics:
\begin{theorem}[Effectiveness of Training via the Eigenvalue of the Kernel]\label{compare}
   Suppose \( \lambda_{\text{min}}(\vS\vS^{\top}) > 0 \) and \( \vD_\varepsilon \) is non-singular, and let \( \varepsilon \geq 0 \) be a constant. Then, we have \( \lambda_{\text{min}}(\vK_{\varepsilon}) > 0 \), and there exists \( T > 0 \) such that  
\begin{equation}
    L_H(\vtheta(t)) \leq L_H(\vtheta(0))\exp\left(-\frac{\lambda_{\text{min}}(\vK_{\varepsilon})}{n} t\right)\label{speed}
\end{equation}
for all \( t \in [0, T] \). Furthermore,  
\begin{multline}
    \lambda_{\text{min}}(\vS\vS^{\top}) \lambda_{\text{min}}(\vD_{\varepsilon}\vD_{\varepsilon}^\top) \\
    \le \lambda_{\text{min}}(\vK_{\varepsilon}) \le 
    \lambda_{\text{min}}(\vS\vS^{\top}) \lambda_{\text{max}}(\vD_{\varepsilon}\vD_{\varepsilon}^\top).
\label{mineigen}
\end{multline}

\end{theorem}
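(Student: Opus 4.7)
The theorem packages three separate claims — (i) $\lambda_{\text{min}}(\vK_\varepsilon)>0$, (ii) an exponential decay bound for $L(\vtheta(t))$ on a short time interval, and (iii) the Rayleigh-type upper bound~\eqref{mineigen} on $\lambda_{\text{min}}(\vK_\varepsilon)$ — so the plan is to treat them in turn. The first and the third are purely linear-algebraic facts about the congruence $\vK_\varepsilon=\vD_\varepsilon(\vS\vS^\top)\vD_\varepsilon^\top$, while the second is a short-time Grönwall estimate applied to the loss ODE that was derived immediately above the statement.

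For claim (i) I would invoke a congruence argument: since $\vS\vS^\top$ is symmetric with $\lambda_{\text{min}}(\vS\vS^\top)>0$ it is positive definite, and since $\vD_\varepsilon$ is non-singular, $\vD_\varepsilon(\vS\vS^\top)\vD_\varepsilon^\top$ is a congruence transform of a positive-definite matrix and hence itself positive definite by Sylvester's law of inertia. For claim (iii) I would exhibit a Rayleigh test vector: let $\vv$ be a unit bottom eigenvector of $\vS\vS^\top$ and set $\vw=(\vD_\varepsilon^\top)^{-1}\vv$; then
\begin{equation*}
\vw^\top\vK_\varepsilon\vw=\vv^\top\vS\vS^\top\vv=\lambda_{\text{min}}(\vS\vS^\top),
\end{equation*}
while $\|\vw\|^2=\vv^\top(\vD_\varepsilon\vD_\varepsilon^\top)^{-1}\vv\geq 1/\lambda_{\text{max}}(\vD_\varepsilon\vD_\varepsilon^\top)$. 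Using the Rayleigh quotient $\vw^\top\vK_\varepsilon\vw/\|\vw\|^2$ as an upper bound on $\lambda_{\text{min}}(\vK_\varepsilon)$ then yields~\eqref{mineigen} after a single division.

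The main technical work lies in claim (ii). Starting from the identity $\tfrac{\D L}{\D t}=-\tfrac{1}{n^2}\vl\vK_\varepsilon\vl^\top$ together with $L(\vtheta)=\tfrac{1}{2n}\|\vl\|^2$, I would estimate
\begin{equation*}
\vl\,\vK_\varepsilon(t)\,\vl^\top\geq \lambda_{\text{min}}\bigl(\vK_\varepsilon(t)\bigr)\|\vl\|^2 = 2n\,\lambda_{\text{min}}\bigl(\vK_\varepsilon(t)\bigr)\,L(\vtheta(t)).
\end{equation*}
The subtlety is that $\vK_\varepsilon$ depends on $\vtheta(t)$ through $\vS$, so $\lambda_{\text{min}}(\vK_\varepsilon(t))$ is itself time-varying. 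My plan is a continuity argument: because $\vtheta(t)$ evolves continuously along the gradient flow and $\lambda_{\text{min}}$ of a symmetric matrix is a continuous function of its entries (Weyl's inequality), there exists $T>0$ such that $\lambda_{\text{min}}(\vK_\varepsilon(t))\geq \lambda_{\text{min}}(\vK_\varepsilon(0))/2$ on $[0,T]$. Substituting this into $\tfrac{\D L}{\D t}\leq -\tfrac{2}{n}\lambda_{\text{min}}(\vK_\varepsilon(t))\,L(\vtheta(t))$ and applying Grönwall's inequality delivers the exponential decay~\eqref{speed}. The main obstacle — and the precise reason the statement is only local in time — is quantifying how long the NTK-type kernel remains uniformly coercive along the flow; any extension beyond $[0,T]$ would require either overparameterization hypotheses or a quantitative Lipschitz control on $\vS(\vtheta)$ in the spirit of the lazy-training regime.
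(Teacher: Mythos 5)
Your proposal is correct, and the overall architecture (short-time Grönwall estimate plus two purely linear-algebraic facts about the congruence $\vK_\varepsilon = \vD_\varepsilon(\vS\vS^\top)\vD_\varepsilon^\top$) matches the paper's. For the short-time decay bound your continuity argument is precisely what the paper formalizes: it defines $T := \inf\{t \mid \|\vK_\varepsilon(\vtheta(t)) - \vK_\varepsilon(\vtheta(0))\|_F > \tfrac{1}{2}\lambda_{\text{min}}(\vK_\varepsilon)\}$, so that $\lambda_{\text{min}}(\vK_\varepsilon(\vtheta(t))) \geq \tfrac{1}{2}\lambda_{\text{min}}(\vK_\varepsilon(\vtheta(0)))$ on $[0,T]$, and then integrates $\frac{\D L}{\D t} \leq -\tfrac{2}{n}\lambda_{\text{min}}(\vK_\varepsilon(t))L$ — exactly your factor-of-$1/2$ localization followed by Grönwall.

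Where you genuinely diverge is in the linear algebra. The paper invokes a single cited lemma (Li, 1999): for Hermitian $\vA$ and $\tilde\vA = \vT^*\vA\vT$, $\lambda_{\text{min}}(\vT^*\vT) \leq \lambda_{\text{min}}(\tilde\vA)/\lambda_{\text{min}}(\vA) \leq \lambda_{\text{max}}(\vT^*\vT)$, and reads off both the positivity $\lambda_{\text{min}}(\vK_\varepsilon) \geq \lambda_{\text{min}}(\vS\vS^\top)\lambda_{\text{min}}(\vD_\varepsilon\vD_\varepsilon^\top) > 0$ and the upper bound \eqref{mineigen} from the two sides of that sandwich. You instead give a self-contained elementary argument: Sylvester's law of inertia for positivity, and an explicit Rayleigh test vector $\vw = (\vD_\varepsilon^\top)^{-1}\vv$ for the upper bound. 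Both are valid; yours has the virtue of being transparent and not relying on an external result, while the paper's simultaneously yields the useful companion lower bound $\lambda_{\text{min}}(\vK_\varepsilon) \geq \lambda_{\text{min}}(\vS\vS^\top)\lambda_{\text{min}}(\vD_\varepsilon\vD_\varepsilon^\top)$ at no extra cost. One small slip worth fixing in your Rayleigh computation: with $\vw = (\vD_\varepsilon^\top)^{-1}\vv$ one has $\|\vw\|^2 = \vv^\top(\vD_\varepsilon^\top\vD_\varepsilon)^{-1}\vv$, not $\vv^\top(\vD_\varepsilon\vD_\varepsilon^\top)^{-1}\vv$; since $\vD_\varepsilon^\top\vD_\varepsilon$ and $\vD_\varepsilon\vD_\varepsilon^\top$ share a spectrum for square $\vD_\varepsilon$, the final bound $\|\vw\|^2 \geq 1/\lambda_{\text{max}}(\vD_\varepsilon\vD_\varepsilon^\top)$ and hence \eqref{mineigen} are unaffected, but the intermediate identity as written is off by a transpose.
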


\begin{figure}[t]
    \centering
    \includegraphics[scale=0.48]{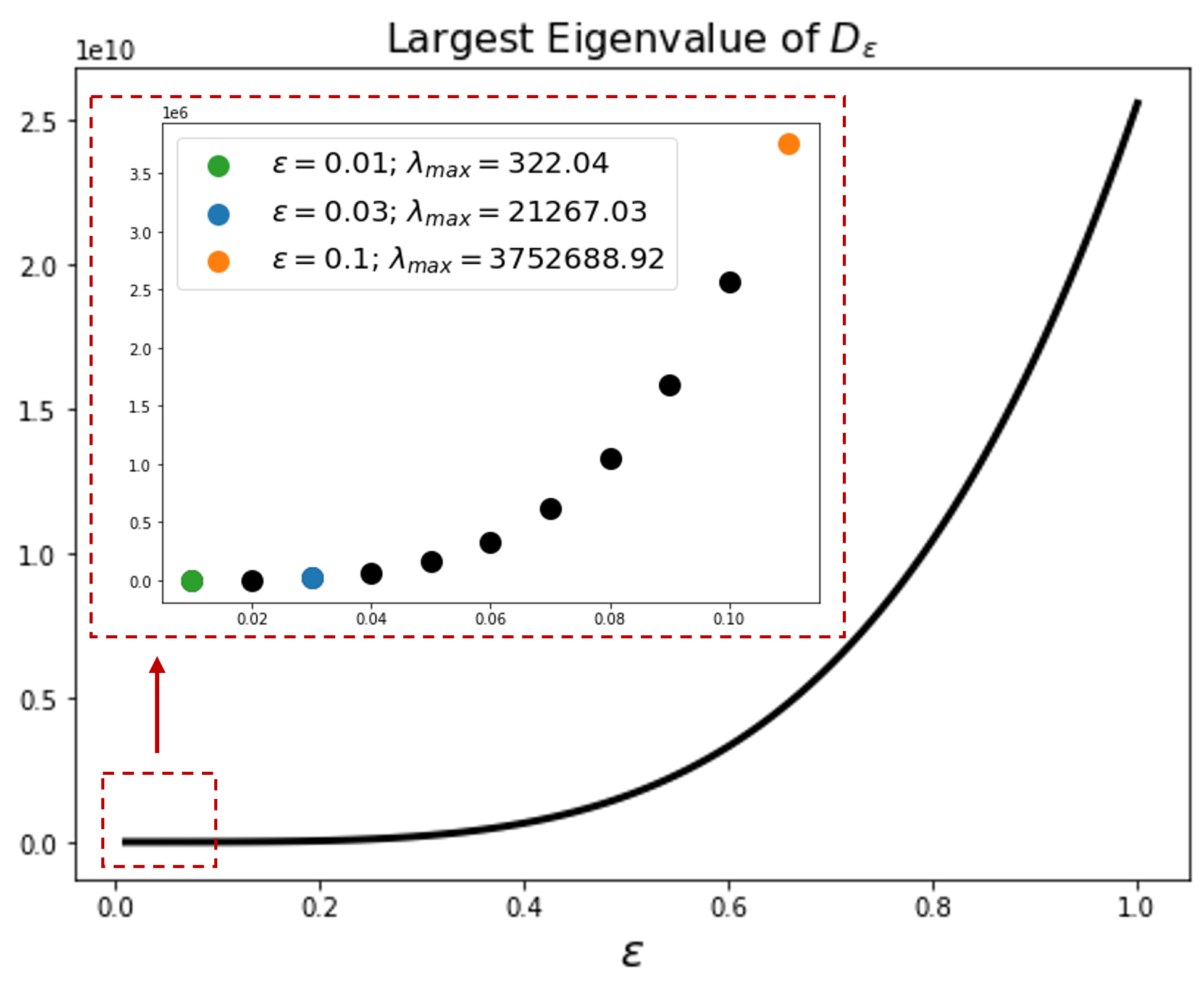}
    \caption{Largest eigenvalue of \(\vD_\varepsilon\) \eqref{eq:discete_operator} for different $\varepsilon$. A smaller $\varepsilon$ results in a smaller largest eigenvalue of \eqref{eq:discete_operator}, leading to a slower convergence rate and increased difficulty in training.}
    \label{fig:1d_allen_cahn_eigen_value}
\end{figure}
\begin{figure*}[htbp!]
    \centering
    \includegraphics[scale=0.15]{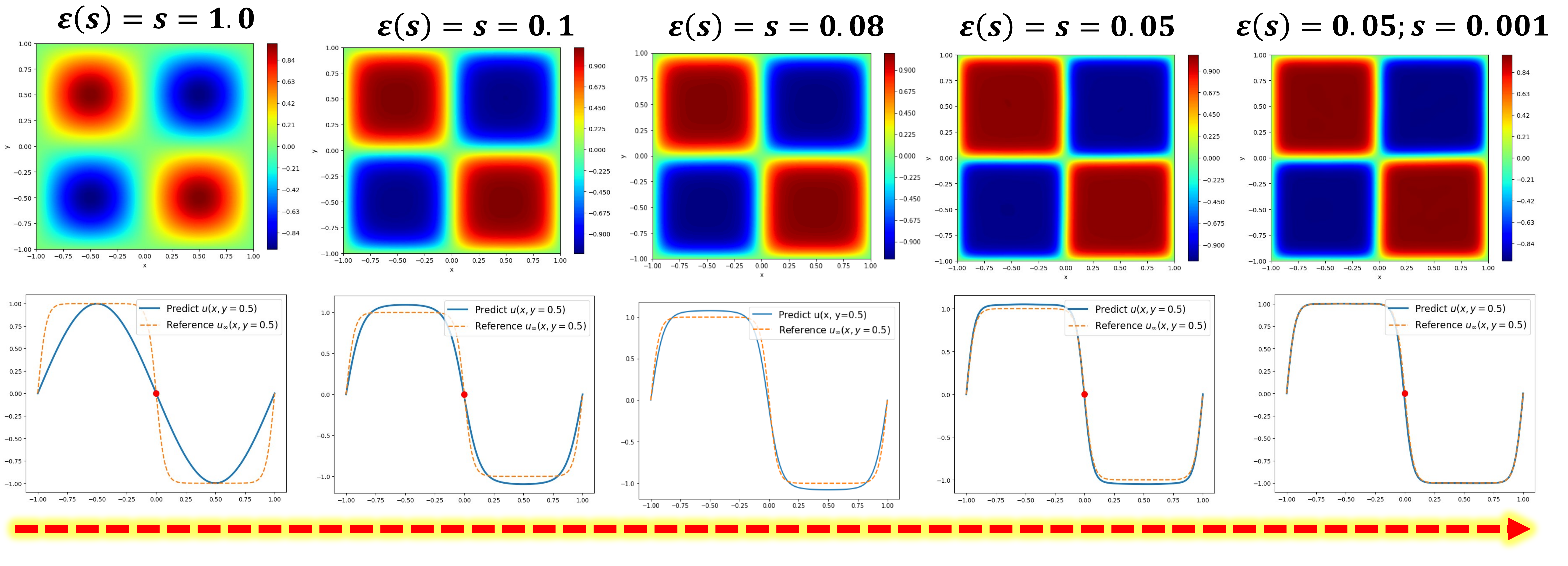}
    \caption{2D Allen Cahn Equaiton. (Top) Evolution of the Homotopy Dynamcis. (Bottom) Plot for Cross-section of $u(x,y)$ at $y = 0.5$ i.e., $u(x,y=0.5)$. The reference solution $u_{\infty}(x)$ represents the ground truth steady-state solution. The L2RE is $8.78e-3$. Number of residual points is $\nres = 50\times50$. }
\label{fig:2D_Allen_Cahn_Equation}
\end{figure*}

\begin{remark}\label{hard}
    For \( \vS\vS^{\top} \), previous works such as \cite{luo2020two, allen2019convergence, arora2019exact, cao2020generalization, yang2025homotopy,du2019gradient,li2020towards} demonstrate that it becomes positive when the width of the neural network is sufficiently large with ReLU activation functions or smooth functions. Additionally, \cite{gao2023gradient} discusses the positivity of the gradient kernel in PINNs for solving heat equations. Therefore, we can reasonably assume that \( \vS\vS^{\top} \) is a strictly positive matrix. In Appendix~\ref{ss}, we present a specific scenario where \( \lambda_{\text{min}}(\vS\vS^{\top}) > 0 \) holds with high probability.

   This theorem demonstrates that the smallest eigenvalue of the kernel directly affects the training speed. Equation \eqref{mineigen} shows that the upper bound of \( \lambda_{\text{min}}(\vK_{\varepsilon}) \) can be influenced by \( \lambda_{\text{max}}(\vD_{\varepsilon}\vD_{\varepsilon}^\top) \). In many PDE settings, the maximum eigenvalue \( \lambda_{\text{max}}(\vD_{\varepsilon}\vD_{\varepsilon}^\top) \) tends to be small when \( \varepsilon \) is small. For example, in this paper, we consider the Allen–Cahn equation, given by
\[
-\varepsilon^2\Delta u + f(u) = 0,
\]
where \( f(u) = u^3 - u \). In this case, \( \vD_\varepsilon \) corresponds to the discrete form of the operator \( -\varepsilon^2\Delta + f'(u) \), which can be written as
\begin{equation}
   \vD_\varepsilon= -\varepsilon^2\Delta_{\text{dis}} + \text{diag} \big(f'(u(\vx_1)), \dots, f'(u(\vx_n)) \big).
    \label{eq:discete_operator}
\end{equation}
According to \cite{morton2005numerical}, the discrete Laplacian \( -\varepsilon^2\Delta_{\text{dis}} \) is strictly positive. Specifically, in the one-dimensional case, its largest eigenvalue is given by
\[
4\varepsilon^2 n^2 \cos^2 \frac{ \pi}{2n+1},
\]
which is close \( 4\varepsilon^2n^2 \) as \( n  \) is large enough. 

Moreover, since \( f'(u(\boldsymbol{x}_i)) \) lies within the interval \([-1,2]\), when \(\varepsilon\) is large (i.e., close to \(1\)), the largest eigenvalue of \(\vD_\varepsilon\) becomes very large regardless of the sampling locations \(\{\boldsymbol{x}_i\}_{i=1}^n\) (see \cref{fig:1d_allen_cahn_eigen_value} for the case \(n=200\). Other equations exhibit similar behavior. Please refer to the Appendix~\ref{sec:apendix_experiments} for detail.) Consequently, by Theorem~\ref{compare} the upper bound on the smallest eigenvalue of \(\vK_\varepsilon\) also becomes large—specifically, it is of order \(n^4\) in this case due to Weyl’s inequalities. As a result, the training speed can achieve a rate of \( \exp(-Cn^3t) \) (see Eq.~\eqref{speed}), which is rapid and indicates that training is relatively easy. In contrast, the lower bound for the training speed is given by \( \exp(-Ct/n) \).

The fastest rate is attained in the special situation where there exists a nonzero vector \(\vx\) that is an eigenvector corresponding to the largest eigenvalue of \(\vD_\varepsilon^\top \vD_\varepsilon\) and, simultaneously, \(\vD_\varepsilon \vx\) is an eigenvector corresponding to the smallest eigenvalue of \(\vS^\top \vS\). This scenario may occur under particular configurations of \(\vS\) and \(\vD_\varepsilon\), which in turn depend on the underlying PDE and the distribution of the sampling points.

On the other hand, when \(\varepsilon\) is small (close to \(0\)), the largest eigenvalue of \(\vD_\varepsilon\) is only of order \(1\) with respect to \(n\). Consequently, the upper bound on the smallest eigenvalue of \(\vK_\varepsilon\) no longer scales as a constant with respect to \(n\). In this case, the training speed is reduced to \( \exp(-Ct/n) \) (as indicated in Eq.~\eqref{speed}), which is significantly slower and suggests that training becomes difficult. Therefore, while a larger \(\varepsilon\) may yield relatively easy training in some instances, a smaller \(\varepsilon\) will invariably lead to challenging training conditions.
\end{remark}

\subsection{Convergence of Homotopy Dynamics}
In this section, we aim to demonstrate that homotopy dynamics is a reasonable approach for obtaining the solution when \( \varepsilon \) is small. Recall that Strategy 2 is merely an alternative approach for solving the linear system in our framework, with the underlying principles remaining the same. Therefore, our theoretical analysis is primarily based on Strategy 1. For simplicity of notation, we denote \( u(\varepsilon) \) as the exact solution of \( H(u,\varepsilon) = 0 \) and \( U(\varepsilon) \) as its numerical approximation in the simulation. Suppose \( H(u(\varepsilon),\varepsilon) = 0 \), and assume that \( \frac{\partial H(u(\varepsilon),\varepsilon)}{\partial u} \) is invertible. Then, the dynamical system (\ref{eq:homotopy_dynamics}) can be rewritten as  
\begin{equation}
\frac{\D u}{\D \varepsilon} = -\left(\frac{\partial H(u(\varepsilon),\varepsilon)}{\partial u}\right)^{-1} \frac{\partial H(u(\varepsilon),\varepsilon)}{\partial \varepsilon} =: h(u(\varepsilon),\varepsilon).\label{dym}
\end{equation}
Applying Euler’s method to this dynamical system, we obtain  
\begin{align}
    U(\varepsilon_{k+1}) = U(\varepsilon_k) + (\varepsilon_{k+1} - \varepsilon_k) h( U(\varepsilon_k),\varepsilon_k).
\end{align} 

The following theorem shows that if \( u(\varepsilon_0)-U(\varepsilon_0) \) is small and the step size \( (\varepsilon_{k+1} - \varepsilon_k) \) is sufficiently small at each step, then \( u(\varepsilon_k)- U(\varepsilon_k) \) remains small.

\begin{theorem}[Convergence of Homotopy Dynamics]\label{small}
    Suppose \( h(\varepsilon,u) \) is a continuous operator for \( 0 < \varepsilon_n \leq \varepsilon_0 \) and \( u \in H^2(\Omega) \), and 
    \[
    \|h(u_1,\varepsilon)-h(u_2,\varepsilon)\|_{H^2(\Omega)}\le P_\varepsilon\|u_1-u_2\|_{H^2(\Omega)}.
    \]
    Assume there exists a constant \( P \) such that 
    \(
    (\varepsilon_k-\varepsilon_{k+1})P_{\varepsilon_k}\le P\cdot \frac{\varepsilon_0-\varepsilon_n}{n}
    \), \(e_0:=\|u(\varepsilon_0)-U(\varepsilon_0)\|_{H^2(\Omega)}  \ll 1\)
    and 
    \begin{align}
    \tau:=\frac{n}{\varepsilon_0-\varepsilon_n} \sup_{0\leq k\leq n} (\varepsilon_k-\varepsilon_{k+1})^2\|u(\varepsilon_k)\|_{H^4(\Omega)}\ll 1,\notag
    \end{align}
    then we have 
    \begin{align}
        &\|u(\varepsilon_n)- U(\varepsilon_n)\|_{H^2(\Omega)}\notag\\\le& e_0e^{P(\varepsilon_0-\varepsilon_n)}+\frac{\tau(e^{P(\varepsilon_0-\varepsilon_n)}-1)}{2P} \ll 1.
    \end{align}
\end{theorem}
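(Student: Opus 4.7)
The plan is to treat this as the classical convergence theorem for explicit Euler, lifted to the Hilbert space $H^2(\Omega)$: I would combine a Taylor-expansion bound on the local truncation error with the global Lipschitz hypothesis on $h$ and close the argument with a discrete Grönwall inequality. Throughout I would write $e_k:=\|u(\varepsilon_k)-U(\varepsilon_k)\|_{H^2(\Omega)}$ and treat $N=n$ in the step-size hypothesis.

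First I would Taylor expand the exact path. Since $\frac{\D u}{\D\varepsilon}(\varepsilon_k)=h(u(\varepsilon_k),\varepsilon_k)$ by \eqref{dym}, expanding around $\varepsilon_k$ gives
\begin{equation*}
u(\varepsilon_{k+1}) = u(\varepsilon_k) + (\varepsilon_{k+1}-\varepsilon_k)\,h(u(\varepsilon_k),\varepsilon_k) + R_k,
\end{equation*}
with integral remainder $\|R_k\|_{H^2(\Omega)}\le\tfrac12(\varepsilon_k-\varepsilon_{k+1})^2\sup_{[\varepsilon_{k+1},\varepsilon_k]}\|\tfrac{\D^2 u}{\D\varepsilon^2}\|_{H^2(\Omega)}$. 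The input I need here is the a priori bound $\|\tfrac{\D^2u}{\D\varepsilon^2}(\varepsilon_k)\|_{H^2(\Omega)}\lesssim\|u(\varepsilon_k)\|_{H^4(\Omega)}$, which reflects the structure $h=-H_u^{-1}H_\varepsilon$: each $\varepsilon$-differentiation of $h$ brings in two extra spatial derivatives through $H_\varepsilon$ (e.g.\ $2\varepsilon u''$ in the Allen--Cahn case) and is only partially recovered by the inverse $H_u^{-1}$, so controlling $\D^2u/\D\varepsilon^2$ in $H^2$ genuinely costs four spatial derivatives on $u$. This step is precisely why the $H^4$ norm enters $\tau$.

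Next I would subtract the Euler update $U(\varepsilon_{k+1})=U(\varepsilon_k)+(\varepsilon_{k+1}-\varepsilon_k)h(U(\varepsilon_k),\varepsilon_k)$ from the Taylor identity, take $H^2$-norms, and apply the Lipschitz hypothesis $\|h(u_1,\varepsilon)-h(u_2,\varepsilon)\|_{H^2}\le K_\varepsilon\|u_1-u_2\|_{H^2}$ together with the step-size bound $(\varepsilon_k-\varepsilon_{k+1})K_{\varepsilon_k}\le K(\varepsilon_0-\varepsilon_n)/n$ to obtain the one-step recursion
\begin{equation*}
e_{k+1}\le\bigl(1+\tfrac{K(\varepsilon_0-\varepsilon_n)}{n}\bigr)e_k + \tfrac{1}{2}(\varepsilon_k-\varepsilon_{k+1})^2\|u(\varepsilon_k)\|_{H^4(\Omega)}.
\end{equation*}
Setting $\alpha:=K(\varepsilon_0-\varepsilon_n)/n$ and using the definition of $\tau$, the truncation term is bounded by $\tfrac{\tau(\varepsilon_0-\varepsilon_n)}{2n}$ uniformly in $k$. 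Iterating the recursion and invoking the geometric-sum identity together with $(1+\alpha)^n\le e^{n\alpha}=e^{K(\varepsilon_0-\varepsilon_n)}$ yields
\begin{equation*}
e_n\le(1+\alpha)^n e_0 + \frac{\tau(\varepsilon_0-\varepsilon_n)}{2n}\cdot\frac{(1+\alpha)^n-1}{\alpha}\le e_0\,e^{K(\varepsilon_0-\varepsilon_n)} + \frac{\tau\bigl(e^{K(\varepsilon_0-\varepsilon_n)}-1\bigr)}{2K},
\end{equation*}
which is exactly the claimed estimate; the smallness $\ll 1$ follows from the smallness of $e_0$ and $\tau$ in the hypotheses.

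The main obstacle I expect is not the Grönwall bookkeeping but the Taylor-remainder step: one must justify rigorously that $\|\D^2u/\D\varepsilon^2\|_{H^2(\Omega)}$ is controlled along the exact homotopy branch by $\|u(\varepsilon_k)\|_{H^4(\Omega)}$. This requires (i) smoothness of the branch $\varepsilon\mapsto u(\varepsilon)$ guaranteed by the invertibility of $\partial H/\partial u$ assumed before \eqref{dym}, and (ii) appropriate elliptic-regularity / mapping properties of $H_u^{-1}$ between Sobolev scales so that differentiating $h=-H_u^{-1}H_\varepsilon$ once more in $\varepsilon$ remains bounded after paying two extra spatial derivatives. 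Once this a priori regularity is in place, the remainder of the proof is the classical explicit-Euler analysis in a Banach-space setting.
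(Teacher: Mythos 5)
Your proof follows essentially the same route as the paper's: Taylor expansion of the exact path, subtraction of the forward-Euler update, application of the Lipschitz bound on $h$, and a discrete Grönwall iteration closed by the geometric-series identity and $(1+\alpha)^n\le e^{n\alpha}$. The one place you go beyond the paper is welcome: the paper silently replaces $\|u''(\xi_k)\|_{H^2(\Omega)}$ by $\|u(\varepsilon_k)\|_{H^4(\Omega)}$ when bounding the truncation term by $\tau$, whereas you correctly flag this as a nontrivial a priori regularity input on $h=-H_u^{-1}H_\varepsilon$ that must be justified by elliptic-regularity mapping properties of $H_u^{-1}$ between Sobolev scales. You also correctly identify the $N=n$ notational conflation in the hypothesis.
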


\begin{table*}[t]
    \centering
    \caption{ \textbf{$\lambda_{\min}(\vK_{\varepsilon})$} for different initialization for $\varepsilon=0.01$ in Equation~\eqref{eq:1d_allen_cahn}.}
    \label{Tab:1d_allen_cahn_kernel}
    \tiny
    \begin{tabular}{|c|c|c|c|c|c|c|}
        \hline
        Initialization & Xavier & Hom $\varepsilon=0.1$ & Hom $\varepsilon=0.05$ &Hom $\varepsilon=0.03$ & Hom $\varepsilon=0.02$ \\
        \hline
         $\lambda_{\text{min}}(\vK_{\varepsilon})$& 7.38e-8 & 2.11e-6 & 7.77e-5 & 1.57e-4 & 1.48e-2 \\
        \hline
    \end{tabular}%
\end{table*}

\begin{table*}[t]
    \centering
    \caption{\textbf{2D Allen-Cahn Equation.} Relative $L^2$ error comparison across various training strategies.}
    \label{tab:loss_ad}
    \resizebox{\textwidth}{!}{
    \begin{tabular}{|c|c|c|c|c|c|}
        \hline
        Method & Original PINN & Curriculum~\cite{krishnapriyan2021characterizing} & Time Seq.~\cite{wight2020solving,mattey2022novel} & Resampling & \textbf{Homotopy} \\
        \hline
        $\mathrm{L2RE}$ & 9.56e-1 & 8.89e-1 & 8.95e-2 & 8.25e-1 & \textbf{8.78e-3} \\
        \hline
    \end{tabular}
    }
\end{table*}

   The proof of Theorem~\ref{small} is inspired by \cite{atkinson2009numerical}.  

Theorem~\ref{small} shows that if \( e_0 \) is small and the step size \( (\varepsilon_{k+1} - \varepsilon_k) \) is sufficiently small at each step and satisfies  
$(\varepsilon_k - \varepsilon_{k+1}) P_{\varepsilon_k} \leq P \cdot \frac{\varepsilon_0 - \varepsilon_n}{n}$
i.e., the training step size should depend on the Lipschitz constant of \( h(u,\varepsilon) \), ensuring stable training, then \( u(\varepsilon_k) - U(\varepsilon_k) \) remains small. In other words, when $\varepsilon \to 0$, $P_\varepsilon$ may not be bounded. Nonetheless, we do not require $\varepsilon$ to be exactly zero; it only needs to be a small constant. For a small $\varepsilon$, $P_\varepsilon$ might be large but remains finite. In this case, one must choose sufficiently small steps $\varepsilon_{k+1} - \varepsilon_k$ to ensure that the training error stays controlled. The initial error \( e_0 \) can be very small since we use a neural network to approximate the solution of PDEs for large \( \varepsilon \), where learning is effective.  

The total error \( e_0 \) consists of approximation, generalization, and training errors. The training error can be effectively controlled when \( \varepsilon \) is large (Theorem~\ref{compare}), while the approximation and generalization errors remain small if the sample size is sufficiently large and the neural network is expressive enough. Theoretical justifications are provided in~\cite{yang2023nearly,yang2024deeper} and further discussed in Appendix~\ref{e0}.

For Strategy 2, we enforce \(H_\varepsilon = 0\) by retraining the network from scratch at each homotopy step rather than by integrating the path with Euler’s method. Each iteration uses the previous solution as the initial guess for the next, producing progressively better starting points. To illustrate, we consider the 1D Allen–Cahn equation (Eq.~\eqref{eq:1d_allen_cahn}) with \(\varepsilon = 0.01\). As shown in Table~\ref{Tab:1d_allen_cahn_kernel}, smaller increments \(\Delta\varepsilon\) increase \(\lambda_{\min}(\vK_\varepsilon)\), indicating improved conditioning and faster convergence. In our experiments, \(\varepsilon = 0.02\) yielded the best initialization for \(\varepsilon = 0.01\), highlighting that selecting a homotopy parameter close to the final target provides the most effective initial guess—this encapsulates the key idea behind Strategy 2.

\section{Experiments}
\label{sec:Experiments} 

We conduct several experiments across different problem settings to assess the efficiency of our proposed method. In the following experiments, we adopt Strategy 2 for all training procedures. Detailed descriptions of the experimental settings are provided in \cref{sec:apendix_experiments}.

\subsection{2D Allen Cahn Equation}
\begin{figure*}[t]
    \centering
    \includegraphics[scale=0.41]{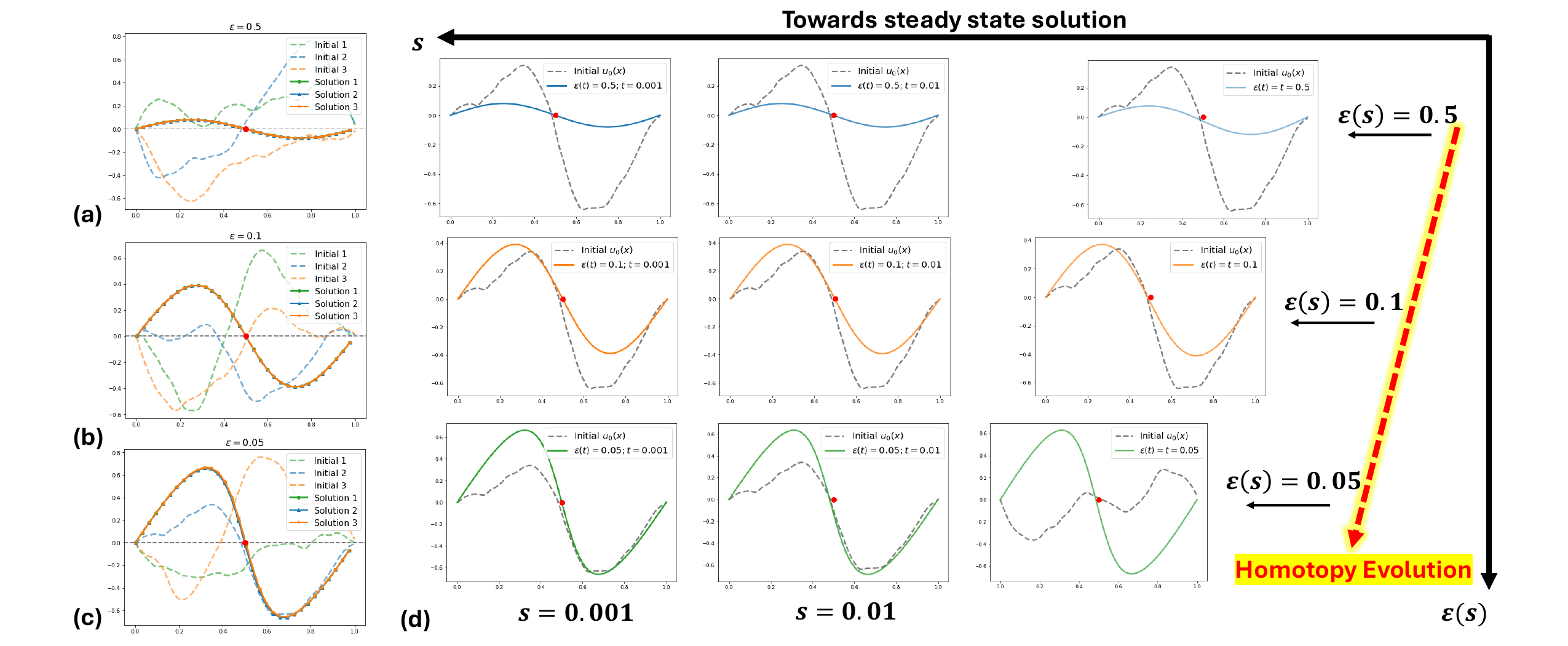}
    \caption{1D Burgers' Equation (Operator Learning): Steady-state solutions for different initializations $u_0$ under varying viscosity $\varepsilon$: (a) $\varepsilon = 0.5$, (b) $\varepsilon = 0.1$, (c) $\varepsilon = 0.05$. The results demonstrate that all final test solutions converge to the correct steady-state solution. (d) Illustration of the evolution of a test initialization $u_0$ following homotopy dynamics. The number of residual points is $\nres = 128$.}
    \label{fig:Burgers_result}
\end{figure*}
First, we consider the following time-dependent problem:
\begin{align}
& u_t = \varepsilon^2 \Delta u - u(u^2 - 1), \quad (x, y) \in [-1, 1] \times [-1, 1] \nonumber \\
& u(x, y, 0) = - \sin(\pi x) \sin(\pi y) \label{eq.hom_2D_AC}\\
& u(-1, y, t) = u(1, y, t) = u(x, -1, t) = u(x, 1, t) = 0. \nonumber
\end{align}
We aim to find the steady-state solution for this equation with $\varepsilon = 0.05$ and define the homotopy as:
\begin{equation}
    H(u, s, \varepsilon) = (1 - s)\left(\varepsilon(s)^2 \Delta u - u(u^2 - 1)\right) + s(u - u_0),\nonumber
\end{equation}
where $s \in [0, 1]$. Specifically, when $s = 1$, the initial condition $u_0$ is automatically satisfied, and when $s = 0$, it recovers the steady-state problem. The function $\varepsilon(s)$ is given by
\begin{equation}
\varepsilon(s) = 
\left\{\begin{array}{l}
s, \quad s \in [0.05, 1], \\
0.05, \quad s \in [0, 0.05].
\end{array}\right.\label{eq:epsilon_t}
\end{equation}

Here, $\varepsilon(s)$ varies with $s$ during the first half of the evolution. Once $\varepsilon(s)$ reaches $0.05$, it remains fixed, and only $s$ continues to evolve toward $0$. As shown in \cref{fig:2D_Allen_Cahn_Equation} and Table~\ref{tab:loss_ad}, our method achieves superior solution accuracy compared to the other approaches.

\subsection{High-Dimensional Helmholtz Equation}

One of the advantages of solving differential equations using neural networks is their potential to overcome the curse of dimensionality and tackle high-dimensional problems. In this example, we demonstrate this capability by comparing the performance of the standard PINN approach with our proposed homotopy-based training method on the following high-dimensional Helmholtz equation:
\begin{equation}
-\varepsilon^2 \Delta u - \tfrac{1}{d} u = 0 \quad \text{in } \Omega, \qquad 
u = g \quad \text{on } \partial \Omega,
\label{eqn:Helmholtz}
\end{equation}
where $\Omega = [-1,1]^d$. This problem admits the exact solution
\[
u(\vx) = \sin \left( \frac{1}{d} \sum_{i=1}^d \frac{1}{\varepsilon} x_i \right).
\]
The corresponding homotopy is defined by 
\[
H(u, \varepsilon) = \varepsilon^2 \Delta u + \frac{1}{d} u.
\]

Here, we start the homotopy training by $\varepsilon_0 = 1$. The numerical results for dimension $d=20$ are reported in Table~\ref{tab:Helmholz}, where we compare the relative $L^2$ errors obtained by classical PINN training and the proposed homotopy dynamics for different values of $\varepsilon$. As shown in the table, the homotopy-based approach achieves consistently lower errors, especially for small $\varepsilon$, where classical training suffers from significant accuracy degradation.

\begin{table}[htbp!]
\setlength{\abovecaptionskip}{2pt}
\setlength{\belowcaptionskip}{2pt}
\renewcommand{\arraystretch}{1.0}
\caption{Comparison of relative $L^2$ errors achieved by classical training and homotopy dynamics for different $\varepsilon$ values in the high-dimensional Helmholtz equation.}
\centering
\scriptsize
\begin{tabular}{|c|c|c|c|} 
    \hline 
    Dimension $d=20$ & $\varepsilon = 1/2$ & $\varepsilon = 1/20$ & $\varepsilon = 1/50$ \\ \hline 
    Classical Training      & 1.23e-3  & 7.21e-2  & 9.98e-1   \\ \hline 
    Homotopy Dynamics       & 5.86e-4  & 5.00e-4  & 5.89e-4   \\ \hline
\end{tabular}
\label{tab:Helmholz}
\end{table}

\subsection{Burgers Equation}
In this example, we adopt the operator learning framework to solve for the steady-state solution of the Burgers equation, given by:
\begin{align}
& u_t+\left(\frac{u^2}{2}\right)_x - \varepsilon u_{xx}=\pi \sin (\pi x) \cos (\pi x), \quad x \in[0, 1]\nonumber\\
& u(x, 0)=u_0(x),\label{eq:1D_Burgers} \\
& u(0, t)=u(1, t)=0, \nonumber 
\end{align}
with Dirichlet boundary conditions, where $u_0 \in L_{0}^2((0, 1); \mathbb{R})$ is the initial condition and $\varepsilon \in \mathbb{R}$ is the viscosity coefficient. We aim to learn the operator mapping the initial condition to the steady-state solution, $G^{\dagger}: L_{0}^2((0, 1); \mathbb{R}) \rightarrow H_{0}^r((0, 1); \mathbb{R})$, defined by $u_0 \mapsto u_{\infty}$ for any $r > 0$. As shown in Theorem 2.2 of \cite{KREISS1986161} and Theorems 2.5 and 2.7 of \cite{hao2019convergence}, for any $\varepsilon > 0$, the steady-state solution is independent of the initial condition, with a single shock occurring at $x_s = 0.5$. Here, we use DeepONet~\cite{lu2021deeponet} as the network architecture. 
The homotopy definition, similar to ~\cref{eq.hom_2D_AC}, can be found in \cref{Ap:operator}. The results can be found in \cref{fig:Burgers_result} and \cref{tab:loss_burgers}. Experimental results show that the homotopy dynamics strategy performs well in the operator learning setting as well.  As shown in Appendix~\ref{tab:fdm_deeponet_burgers}, DeepONet trained via homotopy dynamics achieves comparable accuracy but significantly faster inference than the finite difference method.

\begin{table}[htbp!]
\setlength{\abovecaptionskip}{2pt}
\setlength{\belowcaptionskip}{2pt}
\renewcommand{\arraystretch}{0.9}
\caption{Homotopy Dynamics Results on operator learning for Burgers Equation: Homotopy Loss, Relative $L^2$ Error, and Shock Localization Accuracy.}
\centering
\scriptsize
\begin{tabular}{|c|c|c|c|}
\hline
& $\varepsilon = 0.5$ & $\varepsilon = 0.1$ & $\varepsilon = 0.05$ \\ \hline
Homotopy Loss $L_H$ & 7.55e-7 & 3.40e-7 & 7.77e-7 \\ \hline
L2RE & 1.50e-3 & 7.00e-4 & 2.52e-2 \\ \hline
MSE Distance at $x_s$ & 1.75e-8 & 9.14e-8 & 1.2e-3 \\ \hline
\end{tabular}
\label{tab:loss_burgers}
\end{table}

\section{Conclusion}

In this work, we explore the challenges of using neural networks to solve singularly perturbed problems. Specifically, we analyze the training difficulties caused by certain parameters in the PDEs. To overcome these challenges, we propose a training method based on homotopy dynamics to avoid training original and singularly perturbed problems directly and to improve the training performance of neural networks to solve such problems. Our theoretical analysis supports the convergence of the proposed homotopy dynamics. Experimental results demonstrate that our method performs well across a range of singularly perturbed problems. In solution approximation tasks, it accurately captures the steady-state solutions of the Allen–Cahn equation and effectively handles high-dimensional Helmholtz equations with large wave numbers. Moreover, in the context of operator learning, it achieves strong performance on the Burgers’ equation. Both the theoretical analysis and experimental results consistently validate the effectiveness of our proposed method.

Looking ahead, it will be valuable to explore whether homotopy dynamics can be applied to a wider range of practical problems. We believe that homotopy dynamics offers a natural entry point for comparing neural‐network methods with traditional techniques.

\newpage
\section*{Acknowledgements}
Y.Y. and W.H. was supported by National Institute of General Medical Sciences through grant 1R35GM146894. The work of Y.X. was supported by the Project of Hetao Shenzhen-HKUST Innovation Cooperation Zone HZQB-KCZYB-2020083. 

We would like to acknowledge helpful comments from the anonymous reviewers and area chairs, which have improved this submission.

\section*{Impact Statement}
This paper presents work whose goal is to advance the field of scientific machine learning. There are many potential societal consequences of our work, none which we feel must be specifically highlighted here.





\bibliography{references}
\bibliographystyle{icml2025}

\newpage
\appendix
\onecolumn
\section{Proofs of Theorems \ref{compare} and \ref{small}}
\label{sec:problem_setup_additional}
Before we prove Theorem \ref{compare}, note that our method can be readily generalized to deep neural networks, as the underlying theoretical techniques remain the same; we merely need to combine our approach with the results in \cite{du2019gradient}. In this paper, we employ a two-layer neural network to simplify the notation and enhance readability, focusing on explaining why training becomes challenging when $\varepsilon$ is small. 
\subsection{\( \lambda_{\text{min}}(\vS\vS^{\top}) > 0 \)}\label{ss}
In this subsection, we consider a two-layer neural network defined as follows:
\begin{equation}
    \phi(\boldsymbol{x};\boldsymbol{\theta}) := \frac{1}{\sqrt{m}} \sum_{k=1}^{m} a_k\, \sigma \bigl(\boldsymbol{\omega}_k^{\top} \boldsymbol{x}\bigr),
\end{equation}
where the activation function \(\sigma\) satisfies the following assumption:
\begin{assumption}\label{sigma}
   The function \(\sigma(\cdot)\) is analytic and not a polynomial. Moreover, there exists a positive constant \(c\) such that 
   \[
   |\sigma(x)| \le c|x|
   \]
   for all \(x\).
\end{assumption}
Note that both \(\sigma(x)=\ln (1+\exp (x))\) and \(\sigma(x)=\frac{1}{1+\exp (-x)}\) satisfy Assumption~\ref{sigma}.

We assume that the weights and biases are sampled as follows: 
\begin{equation}
    \boldsymbol{\omega}_k \sim N\left(0, \boldsymbol{I}_d\right), \quad a_k \sim N(0,1),
\end{equation}
where \( N(0,1) \) denotes the standard Gaussian distribution.

The kernels characterizing the training dynamics take the following form:\begin{align}
    k^{[a]}(\boldsymbol{x},\boldsymbol{x}'):=&\mathbf{E}_{\boldsymbol{\omega}}\sigma(\boldsymbol{\omega}^{\top}\boldsymbol{x})\sigma(\boldsymbol{\omega}^{\top}\boldsymbol{x}')\notag\\k^{[\boldsymbol{\omega}]}(\boldsymbol{x},\boldsymbol{x}'):=&\mathbf{E}_{(a,\boldsymbol{\omega})}a^2\sigma'(\boldsymbol{\omega}^{\top}\boldsymbol{x})\sigma'(\boldsymbol{\omega}^{\top}\boldsymbol{x}')\boldsymbol{x}\cdot\boldsymbol{x}'.
\end{align} The Gram matrices, denoted as $\boldsymbol{K}^{[a]}$ and $\boldsymbol{K}^{[\boldsymbol{\omega}]}$, corresponding to an infinite-width two-layer network with the activation function $\sigma$, can be expressed as follows:\begin{align}
    &K_{ij}^{[a]}=k^{[a]}(\boldsymbol{x}_i,\boldsymbol{x}_j),~\boldsymbol{K}^{[a]}=(K_{ij}^{[a]})_{n\times n},\notag\\& K_{ij}^{[\boldsymbol{\omega}]}=k^{[\boldsymbol{\omega}]}(\boldsymbol{x}_i,\boldsymbol{x}_j),~\boldsymbol{K}^{[\boldsymbol{\omega}]}=(K_{ij}^{[\boldsymbol{\omega}]})_{n\times n}.
\end{align}

\begin{lemma}[\cite{du2019gradient}]
\label{positive}    Suppose that Assumption \ref{sigma} holds and for any $i, j \in[n], i \neq j, \vx_i \nparallel \vx_j$.
Then the matrices \(\boldsymbol{K}^{[\boldsymbol{\omega}]}\) and \(\boldsymbol{K}^{[a]}\) are strictly positive, i.e. \begin{align}
    \lambda_1:=\min\left\{\lambda_\text{min}\left(\boldsymbol{K}^{[\boldsymbol{\omega}]}\right),\lambda_\text{min}\left(\boldsymbol{K}^{[a]}\right)\right\}>0.
\end{align}
\end{lemma}
It is easy to check that
\begin{equation}
    \boldsymbol{K}^{[\boldsymbol{\omega}]} + \boldsymbol{K}^{[a]} = \lim_{m\to \infty} \vS \vS^{\top}
\end{equation}
based on the law of large numbers. Furthermore, we can show that the accuracy decreases exponentially as the width of the neural network increases.

\begin{definition}[\cite{vershynin2018high}]
    A random variable $X$ is sub-exponential if and only if its sub-exponential norm is finite i.e.\begin{equation}
        \|X\|_{\psi_1}:=\inf\{s>0\mid\mathbf{E}_X[e^{|X|/s}\le 2.]
    \end{equation} Furthermore, the chi-square random variable $X$ is a sub-exponential random variable and $C_{\psi,d}:=\|X\|_{\psi_1}$.
\end{definition}

\begin{lemma}\label{matrice sub}
    Suppose that $\boldsymbol{w} \sim N\left(0, \boldsymbol{I}_d\right), a \sim N(0,1)$ and given $\boldsymbol{x}_i, \boldsymbol{x}_j \in \Omega$. Then we have

(i) if $\mathrm{X}:=\sigma\left(\boldsymbol{w}^{\top} \boldsymbol{x}_i\right) \sigma\left(\boldsymbol{x} \cdot \boldsymbol{x}_j\right)$, then $\|\mathrm{X}\|_{\psi_1} \leq cd C_{\psi, d}$, where $c$ is the constant shown in Assumption \ref{sigma}.

(ii) if $\mathrm{X}:=a^2 \sigma^{\prime}\left(\boldsymbol{w}^{\top} \boldsymbol{x}_i\right) \sigma^{\prime}\left(\boldsymbol{w}^{\top} \boldsymbol{x}_j\right) \boldsymbol{x}_i \cdot \boldsymbol{x}_j$, then $\|\mathrm{X}\|_{\psi_1} \leq cd C_{\psi, d}$.
\end{lemma}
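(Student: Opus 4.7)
The plan is to dominate $\mathrm{X}$ almost surely by a scalar multiple of a chi-square random variable and invoke two elementary properties of the Orlicz $\psi_1$ norm: (a) monotonicity under pointwise absolute value, i.e.\ $|X|\le|Y|$ a.s.\ implies $\|X\|_{\psi_1}\le\|Y\|_{\psi_1}$; and (b) positive homogeneity, $\|cZ\|_{\psi_1}=|c|\,\|Z\|_{\psi_1}$. The pointwise bound is driven by two basic properties of $\sigma=\mathrm{ReLU}$: $|\sigma(z)|\le|z|$ and $\sigma'(z)\in\{0,1\}$. I treat the inputs as lying in the bounded domain $\Omega\subset\sR^d$ from the setup, so that $\|\boldsymbol{x}_i\|\,\|\boldsymbol{x}_j\|\le d$ in the natural normalization (e.g.\ $\Omega\subset[-1,1]^d$); the factor $d$ appearing in the conclusion is precisely this geometric bound.

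For part (i), Cauchy--Schwarz together with $|\sigma(z)|\le|z|$ gives
\[
|\mathrm{X}|=|\sigma(\boldsymbol{w}^{\top}\boldsymbol{x}_i)|\,|\sigma(\boldsymbol{w}^{\top}\boldsymbol{x}_j)|\le|\boldsymbol{w}^{\top}\boldsymbol{x}_i|\,|\boldsymbol{w}^{\top}\boldsymbol{x}_j|\le\|\boldsymbol{w}\|^2\,\|\boldsymbol{x}_i\|\,\|\boldsymbol{x}_j\|\le d\,\|\boldsymbol{w}\|^2.
\]
Since $\|\boldsymbol{w}\|^2\sim\chi^2_d$, the paper's definition of $C_{\psi,d}$ gives $\bigl\|\,\|\boldsymbol{w}\|^2\,\bigr\|_{\psi_1}=C_{\psi,d}$, and properties (a) and (b) immediately yield $\|\mathrm{X}\|_{\psi_1}\le d\,C_{\psi,d}$. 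For part (ii), since $\sigma'\in\{0,1\}$, Cauchy--Schwarz yields
\[
|\mathrm{X}|\le a^2\,|\boldsymbol{x}_i\cdot\boldsymbol{x}_j|\le d\,a^2,
\]
and since $a^2\sim\chi^2_1$ satisfies $\|a^2\|_{\psi_1}=C_{\psi,1}$, the same two Orlicz properties give $\|\mathrm{X}\|_{\psi_1}\le d\,C_{\psi,1}\le d\,C_{\psi,d}$. The last inequality uses monotonicity of $C_{\psi,d}$ in $d$, which follows from the explicit formula $C_{\psi,d}=2/(1-2^{-2/d})$ obtained by setting the $\chi^2_d$ moment generating function $(1-2t)^{-d/2}$ equal to $2$ at $t=1/s$.

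The calculation itself is routine; the main obstacle is really bookkeeping---committing to the input normalization that produces exactly the factor $d$ in the statement, and observing the trivial monotonicity of $C_{\psi,d}$ in $d$ so that the $\chi^2_1$ bound in part (ii) can be absorbed into the same $C_{\psi,d}$ constant used for part (i). The substantive content of the lemma is just the observation that the ReLU activation $\sigma(\boldsymbol{w}^{\top}\boldsymbol{x})$ inherits sub-exponential tails directly from the Gaussian weights, with only a deterministic geometric loss coming from the inputs $\boldsymbol{x}_i,\boldsymbol{x}_j$.
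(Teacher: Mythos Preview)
Your proof is correct and follows essentially the same route as the paper: dominate $|\mathrm X|$ pointwise by $d$ times a chi-square variable ($\|\boldsymbol w\|^2$ in (i), $a^2$ in (ii)) and then invoke monotonicity and homogeneity of the $\psi_1$ norm. Your explicit verification that $C_{\psi,1}\le C_{\psi,d}$ via the formula $C_{\psi,d}=2/(1-2^{-2/d})$ is in fact more careful than the paper, which in part (ii) simply writes ``$|\mathrm X|\le d|a|^2\le d\mathrm Z$'' and jumps to the conclusion without spelling out this monotonicity.
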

\begin{proof}
(i) $|\mathrm{X}| \leq d\|\boldsymbol{w}\|_2^2=d \mathrm{Z}$ and
$$
\begin{aligned}
\|\mathrm{X}\|_{\psi_1} & =\inf \left\{s>0 \mid \mathbf{E}_{\mathrm{X}} \exp (|\mathrm{X}| / s) \leq 2\right\} \\
& =\inf \left\{s>0 \mid \mathbf{E}_{\boldsymbol{w}} \exp \left(\left|\sigma\left(\boldsymbol{w}^{\top} \boldsymbol{x}_i\right) \sigma\left(\boldsymbol{w}^{\top} \boldsymbol{x}_j\right)\right| / s\right) \leq 2\right\} \\
& \leq \inf \left\{s>0 \mid \mathbf{E}_{\boldsymbol{w}} \exp \left(cd\|\boldsymbol{w}\|_2^2 / s\right) \leq 2\right\} \\
& =\inf \left\{s>0 \mid \mathbf{E}_{\mathrm{Z}} \exp (cd|\mathrm{Z}| / s) \leq 2\right\} \\
& =cd \inf \left\{s>0 \mid \mathbf{E}_{\mathrm{Z}} \exp (|\mathrm{Z}| / s) \leq 2\right\} \\
& =cd\left\|\chi^2(d)\right\|_{\psi_1} \\
& \leq cd C_{\psi, d}
\end{aligned}
$$
(ii) $|\mathrm{X}| \leq cd|a|^2 \leq cd \mathrm{Z}$ and $\|\mathrm{X}\|_{\psi_1} \leq cd C_{\psi, d}$.
\end{proof}

\begin{proposition}[sub-exponential Bernstein's inequality \cite{vershynin2018high}]\label{vershynin}
    Suppose that $\mathrm{X}_1, \ldots, \mathrm{X}_m$ are i.i.d. sub-exponential random variables with $\mathbf{E} \mathrm{X}_1=\mu$, then for any $s \geq 0$ we have
$$
\mathbf{P}\left(\left|\frac{1}{m} \sum_{k=1}^m \mathrm{X}_k-\mu\right| \geq s\right) \leq 2 \exp \left(-C_0 m \min \left(\frac{s^2}{\left\|\mathrm{X}_1\right\|_{\psi_1}^2}, \frac{s}{\left\|\mathrm{X}_1\right\|_{\psi_1}}\right)\right),
$$
where $C_0$ is an absolute constant.
\end{proposition}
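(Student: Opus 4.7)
The plan is to prove the sub-exponential Bernstein inequality via the standard Chernoff-plus-MGF-bound route, following the pattern used for Hoeffding-type and sub-Gaussian concentration but adapted to the heavier tails that sub-exponential random variables allow. First I would reduce to the centered case: replacing $X_k$ by $X_k - \mu$ does not change the deviation probability, and by the triangle inequality for $\|\cdot\|_{\psi_1}$ the sub-exponential norm only grows by at most a factor of $2$, which can be absorbed into the absolute constant $C_0$. After that, I would derive the probability bound on the upper tail $\mathbf{P}\bigl(\frac{1}{m}\sum_k X_k \geq s\bigr)$; the two-sided bound then follows by applying the same argument to $-X_k$ and combining with a union bound, which is exactly the source of the factor $2$ in the statement.

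The core analytic step is a one-sided MGF estimate: I claim that there exist absolute constants $c_1,c_2 > 0$ such that for every centered sub-exponential random variable $X$,
\begin{equation}
\mathbf{E}\bigl[e^{\lambda X}\bigr] \leq \exp\bigl(c_1 \lambda^2 \|X\|_{\psi_1}^2\bigr) \qquad \text{whenever } |\lambda| \leq \frac{c_2}{\|X\|_{\psi_1}}.
\end{equation}
This will be proved by first extracting moment growth from the definition of the sub-exponential norm. From $\mathbf{E}[e^{|X|/\|X\|_{\psi_1}}] \leq 2$ one obtains $\mathbf{E}|X|^p \leq p!\,\|X\|_{\psi_1}^p$ by comparing to the Taylor series of the exponential, hence $(\mathbf{E}|X|^p)^{1/p} \lesssim p\|X\|_{\psi_1}$. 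Substituting this into the Taylor expansion of $e^{\lambda X}$ term by term, using $\mathbf{E} X = 0$ to kill the linear term, and summing the resulting geometric-type series for $|\lambda|\|X\|_{\psi_1}$ sufficiently small, yields the MGF bound above.

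Given the MGF estimate, the next step is the Chernoff bound applied to the i.i.d.\ sum. By independence and Markov's inequality,
\begin{equation}
\mathbf{P}\!\left(\frac{1}{m}\sum_{k=1}^m X_k \geq s\right) \leq e^{-\lambda m s}\,\prod_{k=1}^m \mathbf{E}\bigl[e^{\lambda X_k}\bigr] \leq \exp\bigl(-\lambda m s + c_1 m \lambda^2 \|X_1\|_{\psi_1}^2\bigr),
\end{equation}
valid for every $\lambda \in (0, c_2/\|X_1\|_{\psi_1}]$. Finally I would optimize in $\lambda$, which splits into two regimes matching the $\min$ in the claimed inequality. If the unconstrained minimizer $\lambda^\star = s/(2c_1\|X_1\|_{\psi_1}^2)$ lies in the admissible interval, i.e.\ $s \leq 2c_1 c_2/\|X_1\|_{\psi_1}$, plugging it in gives the sub-Gaussian-like bound $\exp\bigl(-m s^2/(4c_1\|X_1\|_{\psi_1}^2)\bigr)$. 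Otherwise we saturate at $\lambda = c_2/\|X_1\|_{\psi_1}$, which gives $\exp\bigl(-c_2 m s/\|X_1\|_{\psi_1} + c_1 c_2^2 m\bigr)$; since in this regime $s$ dominates, this is controlled by $\exp(-c_3 m s/\|X_1\|_{\psi_1})$ for a possibly smaller constant $c_3$. Taking $C_0$ to be the minimum of the constants arising in the two regimes and folding in the factor of $2$ from the two-sided bound produces the stated inequality.

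The main obstacle is the MGF estimate in the centered case, specifically making the Taylor-series argument rigorous while controlling the radius $c_2/\|X\|_{\psi_1}$ uniformly in $X$; the rest is essentially bookkeeping in the Chernoff optimization. A small technical care is also required to ensure that after centering the moment bounds still hold with only an absolute constant deterioration, which uses $\|X-\mathbf{E} X\|_{\psi_1} \leq 2\|X\|_{\psi_1}$, again following from Jensen's inequality applied to the defining inequality of the sub-exponential norm.
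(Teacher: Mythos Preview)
Your proof is correct and follows the standard Chernoff--MGF route that Vershynin uses. Note, however, that the paper does not actually prove this proposition: it is quoted directly from \cite{vershynin2018high} as a known result and used as a black box in the proof of Proposition~\ref{eig pos}, so there is no ``paper's own proof'' to compare against. Your argument is essentially the textbook one from that reference.
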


\begin{proposition}\label{eig pos}
    Suppose that Assumption \ref{sigma} holds, and given $\delta \in(0,1)$, $\boldsymbol{w} \sim N\left(0, \boldsymbol{I}_d\right), a \sim N(0,1)$ and the sample set $S=\left\{\boldsymbol{x}_i \right\}_{i=1}^n \subset \Omega$ with $\boldsymbol{x}_i$ 's drawn i.i.d. with uniformly distributed with and any $i, j \in[n], i \neq j, \vx_i \nparallel \vx_j$. If $m \geq \frac{16 n^2 c^2 d^2 C_{\psi, d}}{C_0 \lambda^2} \log \frac{4 n^2}{\delta}$ then with probability at least $1-\delta$ over the choice of $\boldsymbol{\theta}(0)$, we have
$$
\lambda_{\min }\left(\vS\vS^\top\right)\geq\frac{3}{4}(\lambda_{\text{min}}(\vK^{[a]})+\lambda_{\text{min}}(\vK^{[\boldsymbol{\omega}]})).
$$
\end{proposition}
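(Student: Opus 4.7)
The plan is to express $\vS\vS^\top$ as a sum of $m$ independent matrices whose expectation is $\vK^{[a]}+\vK^{[\vw]}$, then use the sub-exponential Bernstein inequality together with Weyl's inequality to transfer positivity from the limiting kernel to the finite-width Gram matrix.

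First I would compute the gradient of the two-layer network explicitly: $\partial_{a_k}\phi(\vx;\vtheta) = \frac{1}{\sqrt{m}}\sigma(\vw_k^\top\vx)$ and $\partial_{\vw_k}\phi(\vx;\vtheta) = \frac{1}{\sqrt{m}}a_k\sigma'(\vw_k^\top\vx)\vx$. Stacking these over the $n$ sample points yields
\begin{equation}
(\vS\vS^\top)_{ij} = \frac{1}{m}\sum_{k=1}^{m}\Bigl[\sigma(\vw_k^\top\vx_i)\sigma(\vw_k^\top\vx_j) + a_k^2\,\sigma'(\vw_k^\top\vx_i)\sigma'(\vw_k^\top\vx_j)\,\vx_i\cdot\vx_j\Bigr],\notag
\end{equation}
so that $\vS\vS^\top$ is a $\frac{1}{m}$ sum of i.i.d.\ random matrices with mean exactly $\vK^{[a]}+\vK^{[\vw]}$. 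Denote the perturbation $\vE := \vS\vS^\top - (\vK^{[a]}+\vK^{[\vw]})$ and set $\lambda := \lambda_{\min}(\vK^{[a]})+\lambda_{\min}(\vK^{[\vw]})$.

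Next I would apply Bernstein entrywise. By Lemma~\ref{matrice sub}, each summand contributing to $E_{ij}$ is sub-exponential with norm at most $dC_{\psi,d}$. Choose the target entrywise precision $s = \lambda/(4n)$. Proposition~\ref{vershynin} then gives, for each fixed pair $(i,j)$ and each of the two kernel-components,
\begin{equation}
\mathbf{P}\bigl(|E_{ij}^{[\cdot]}| \geq s\bigr) \leq 2\exp\!\left(-C_0 m\,\frac{s^2}{d^2 C_{\psi,d}^{2}}\right),\notag
\end{equation}
where I used $s \leq dC_{\psi,d}$ so the minimum in Bernstein is realized by the quadratic term. Taking a union bound over all $2n^2$ scalar events and plugging in $s=\lambda/(4n)$ shows that, under the stated hypothesis on $m$, with probability at least $1-\delta$ every entry of $\vE$ has magnitude at most $\lambda/(4n)$.

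Finally I would upgrade this entrywise bound to operator-norm control via $\|\vE\|_{op}\leq \|\vE\|_F \leq n\cdot \max_{i,j}|E_{ij}| \leq \lambda/4$, and invoke Weyl's inequality to conclude
\begin{equation}
\lambda_{\min}(\vS\vS^\top) \geq \lambda_{\min}(\vK^{[a]}+\vK^{[\vw]}) - \|\vE\|_{op} \geq \lambda - \tfrac{1}{4}\lambda = \tfrac{3}{4}\bigl(\lambda_{\min}(\vK^{[a]})+\lambda_{\min}(\vK^{[\vw]})\bigr),\notag
\end{equation}
where the first inequality uses $\lambda_{\min}(\vK^{[a]}+\vK^{[\vw]}) \geq \lambda_{\min}(\vK^{[a]})+\lambda_{\min}(\vK^{[\vw]})$ since both Gram matrices are positive semidefinite (strictly positive by Lemma~\ref{positive}). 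The main technical obstacle is really bookkeeping: one must verify that in the Bernstein bound the quadratic regime $s^2/\|X\|_{\psi_1}^2$ is the active one for the chosen $s$, keep careful track of the factor $n$ lost when passing from entrywise to operator-norm control, and combine the two separate concentration events for $\vK^{[a]}$ and $\vK^{[\vw]}$ into a single union bound consistent with the prefactor $4n^2/\delta$ inside the logarithm of the width requirement.
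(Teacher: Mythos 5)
Your proof is correct and follows essentially the same route as the paper: view $\vS\vS^\top$ as an empirical average of i.i.d.\ rank-related matrices whose mean is $\vK^{[a]}+\vK^{[\boldsymbol{\omega}]}$, apply the sub-exponential Bernstein bound of Proposition~\ref{vershynin} entrywise (in the quadratic regime) with a union bound over the $2n^2$ scalar events, pass from entrywise control to a Frobenius/operator-norm bound, and finish with Weyl's eigenvalue perturbation inequality. The only cosmetic divergence is your choice of target radius $s=\lambda/(4n)$ with $\lambda=\lambda_{\min}(\vK^{[a]})+\lambda_{\min}(\vK^{[\boldsymbol{\omega}]})$, whereas the paper takes $\varepsilon=\lambda_1/4$ with $\lambda_1=\min\{\lambda_{\min}(\vK^{[a]}),\lambda_{\min}(\vK^{[\boldsymbol{\omega}]})\}$; both yield the stated $\tfrac34$ bound via the subadditivity $\lambda_{\min}(A+B)\ge\lambda_{\min}(A)+\lambda_{\min}(B)$, which you invoke explicitly and the paper uses implicitly.
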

\begin{proof}
Recall that \begin{align}
    \vS = \big[ \nabla_{\vtheta}u_{\vtheta}(\vx_1),\dots, \nabla_{\vtheta}u_{\vtheta}(\vx_n) \big],
\end{align}and $\vtheta$ contain two parts, $a$ and $\boldsymbol{w}$ parts, therefore  $\vS\vS^\top$ can be rewrite as $\vS_{a}\vS_{a}^\top+\vS_{\boldsymbol{w}}\vS_{\boldsymbol{w}}^\top$ where \begin{equation}
   \vS_a = \big[ \nabla_{a}u_{\vtheta}(\vx_1),\dots, \nabla_{a}u_{\vtheta}(\vx_n) \big],~\vS_{\boldsymbol{w}} = \big[ \nabla_{\boldsymbol{w}}u_{\vtheta}(\vx_1),\dots, \nabla_{\boldsymbol{w}}u_{\vtheta}(\vx_n) \big] 
\end{equation}  For any $\varepsilon>0$, we define \begin{align}
        \Omega_{ij}^{[a]}:=\left\{\boldsymbol{\theta}\mid \left|(\vS_a\vS_a^\top)_{ij}(\boldsymbol{\theta})-K_{ij}^{[a]}\right|\le\frac{\varepsilon}{n}\right\},~\Omega_{ij}^{[\boldsymbol{w}]}:=\left\{\boldsymbol{\theta}\mid \left|(\vS_{\boldsymbol{w}}\vS_{\boldsymbol{w}}^\top)_{ij}(\boldsymbol{\theta})-K_{ij}^{[\boldsymbol{w}]}\right|\le\frac{\varepsilon}{n}\right\}.
    \end{align}

    Setting $\varepsilon\le cndC_{\psi,d}$, by Proposition \ref{vershynin} and Lemma \ref{matrice sub}, we have \begin{align}
        \mathbf{P}(\Omega_{ij}^{[a]})\ge 1-2\exp\left(-\frac{mC_0\varepsilon^2}{n^2d^2c^2C_{\psi,d}}\right),~ \mathbf{P}(\Omega_{ij}^{[\boldsymbol{w}]})\ge 1-2\exp\left(-\frac{mC_0\varepsilon^2}{n^2d^2c^2C_{\psi,d}}\right).
    \end{align}

   Due to inclusion-exclusion Principle, we have\begin{align}
       \mathbf{P}\left(\left\{\vtheta\mid \left\|\vS_a\vS_a^\top(\boldsymbol{\theta})-\vK^{[a]}\right\|_F\le \varepsilon\right\}\cap\left\{\vtheta\mid \left\|\vS_{\boldsymbol{w}}\vS_{\boldsymbol{w}}^\top(\boldsymbol{\theta})-\vK^{[\boldsymbol{w}]}\right\|_F\le \varepsilon\right\}\right)\ge  \sum_{i,j=1}^n\left(\mathbf{P}(\Omega_{ij}^{[a]})+\mathbf{P}(\Omega_{ij}^{[\boldsymbol{w}]})\right)-2n^2-1,
   \end{align} therefore, with probability at least \[1-4n^2\exp\left(-\frac{mC_0\varepsilon^2}{n^2d^2c^2C_{\psi,d}^2}\right)\] over the choice of $\boldsymbol{\theta}$, we have \begin{align}
    \left\|\vS_a\vS_a^\top(\boldsymbol{\theta})-\vK^{[a]}\right\|_F\le \varepsilon, ~\left\|\vS_{\boldsymbol{w}}\vS_{\boldsymbol{w}}^\top(\boldsymbol{\theta})-\vK^{[\boldsymbol{w}]}\right\|_F\le \varepsilon
    \end{align}
Hence by taking $\varepsilon=\frac{\lambda_1}{4}$ and $\delta=4n^2\exp\left(-\frac{mC_0\lambda_1^2}{16n^2d^2c^2C_{\psi,d}^2}\right)$, where $\lambda_1=\min\{\lambda_{\text{min}}(\vK^{[a]}),\lambda_{\text{min}}(\vK^{[\boldsymbol{\omega}]})\}$\begin{align}
       \lambda_{\min }\left(\vS\vS^\top\right)&\ge \lambda_{\min }\left(\vS_a\vS_a^\top\right)+\lambda_{\min }\left(\vS_{\boldsymbol{\omega}}\vS_{\boldsymbol{\omega}}^\top\right)\notag\\&\ge \lambda_{\text{min}}(\vK^{[a]})+\lambda_{\text{min}}(\vK^{[\boldsymbol{\omega}]})-\left\|\vS_a\vS_a^\top(\boldsymbol{\theta})-\vK^{[a]}\right\|_F-\left\|\vS_{\boldsymbol{w}}\vS_{\boldsymbol{w}}^\top(\boldsymbol{\theta})-\vK^{[\boldsymbol{w}]}\right\|_F\notag\\&\geq\frac{3}{4}(\lambda_{\text{min}}(\vK^{[a]})+\lambda_{\text{min}}(\vK^{[\boldsymbol{\omega}]})).
    \end{align}
\end{proof}

Combining Lemma \ref{positive} and Proposition \ref{eig pos}, we obtain that under the conditions stated in Proposition \ref{eig pos}, the following holds with high probability:
\begin{equation}
    \lambda_{\text{min}}(\vS \vS^{\top}) > 0.
\end{equation}

\subsection{Proof of Theorem \ref{compare}}
We can analysis the smallest eigenvalue of the problems based on the following lemma:
\begin{lemma}[\cite{li1999lidskii}]\label{comparelink}
   Let $\vA$ be an $n \times n$ Hermitian matrix and let $\tilde{\vA}=\vT^* \vA \vT$. Then we have
$$
\lambda_{\text{min}}\left(\vT^* \vT\right) \leq  \frac{\lambda_{\text{min}}(\tilde{\vA})}{\lambda_{\text{min}}(\vA)} \leq \lambda_{\text{max}}\left(\vT^* \vT\right).
$$

\end{lemma}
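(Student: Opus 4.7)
The claim is a form of Ostrowski's classical congruence inequality, which I would prove via the Rayleigh quotient characterization of extremal eigenvalues. Under the implicit hypothesis $\lambda_{\text{min}}(\vA)>0$ (which is needed for the stated ratio to be meaningful; in the application to Theorem~\ref{compare} one has $\vA=\vS\vS^\top$ positive definite and $\vT=\vD_\varepsilon^\top$ non-singular), the lemma is equivalent to the two-sided sandwich
\[
\lambda_{\text{min}}(\vA)\,\lambda_{\text{min}}(\vT^*\vT)\;\le\;\lambda_{\text{min}}(\tilde{\vA})\;\le\;\lambda_{\text{min}}(\vA)\,\lambda_{\text{max}}(\vT^*\vT),
\]
and the plan is to establish the two sides separately, using one quadratic-form chain for the lower bound and one explicit test vector for the upper bound.

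For the left inequality, I would substitute $y=\vT x$ into the quadratic form defining the Rayleigh quotient of $\tilde{\vA}$ and chain two Rayleigh bounds:
\[
x^{*}\tilde{\vA} x \;=\; (\vT x)^{*}\vA(\vT x)\;\ge\; \lambda_{\text{min}}(\vA)\,\|\vT x\|^{2} \;=\; \lambda_{\text{min}}(\vA)\,x^{*}\vT^{*}\vT x\;\ge\; \lambda_{\text{min}}(\vA)\,\lambda_{\text{min}}(\vT^{*}\vT)\,\|x\|^{2}.
\]
Dividing by $\|x\|^{2}$ and taking the infimum over $x\ne 0$ yields the left inequality at once.

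For the right inequality, I plan to witness the bound with a single test vector. Let $v$ be a unit eigenvector of $\vA$ at $\lambda_{\text{min}}(\vA)$, and choose $x := \vT^{-1}v$ (valid since $\vT$ is non-singular). Then the numerator of the Rayleigh quotient is $x^{*}\tilde{\vA} x = v^{*}\vA v = \lambda_{\text{min}}(\vA)$, while for the denominator I use
\[
\|v\|^{2} \;=\; \|\vT(\vT^{-1}v)\|^{2} \;=\; (\vT^{-1}v)^{*}\vT^{*}\vT(\vT^{-1}v) \;\le\; \lambda_{\text{max}}(\vT^{*}\vT)\,\|\vT^{-1}v\|^{2},
\]
so $\|x\|^{2}\ge \|v\|^{2}/\lambda_{\text{max}}(\vT^{*}\vT)$. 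Consequently the Rayleigh quotient at this $x$ is at most $\lambda_{\text{min}}(\vA)\,\lambda_{\text{max}}(\vT^{*}\vT)$, which by the variational characterization upper-bounds $\lambda_{\text{min}}(\tilde{\vA})$.

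The only real subtlety is degeneracy. If $\vT$ is rank-deficient, both $\lambda_{\text{min}}(\vT^{*}\vT)=0$ and $\lambda_{\text{min}}(\tilde{\vA})=0$ (under $\vA$ positive semi-definite), so the left inequality becomes trivial and the right one follows from the same chain applied to any $y$ in the range of $\vT$; the case of an indefinite $\vA$ falls outside the regime in which a ratio involving $\lambda_{\text{min}}(\vA)$ is meaningful. Beyond this, the argument is a clean two-line Rayleigh-quotient computation and is precisely the $k=n$ (or $k=1$, depending on ordering) case of Ostrowski's theorem on eigenvalues under congruence that underlies the cited Lidskii-type result.
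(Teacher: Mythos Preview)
The paper does not actually prove this lemma; it is stated as a citation from \cite{li1999lidskii} and invoked as a black box in the proof of Theorem~\ref{compare}, so there is no in-paper argument to compare against. Your Rayleigh-quotient derivation is correct under the positive-definiteness hypothesis you flag (which is precisely the regime $\vA=\vS\vS^\top$, $\vT=\vD_\varepsilon^\top$ non-singular in which the paper applies the lemma), and it is the standard proof of Ostrowski's congruence inequality---indeed the $k=1$ case of the Lidskii-type result cited.
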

\begin{proof}[Proof of Theorem \ref{compare}]
    We first show that \( \lambda_{\text{min}}(\vK_\varepsilon) > 0 \), which follows directly from Lemma~\ref{comparelink}:
    \[
    \lambda_{\text{min}}(\vK_\varepsilon) \geq \lambda_{\text{min}}(\vS\vS^\top) \cdot \lambda_{\text{min}}(\vD_\varepsilon\vD_\varepsilon^\top) > 0.
    \]
    Therefore, at the beginning of gradient descent, the kernel of the gradient descent step is strictly positive. We then define \( T \) as
    \begin{equation}
        T := \inf\{t \mid \boldsymbol{\theta}(t) \not\in N(\boldsymbol{\theta}(0))\},\label{t_1}
    \end{equation}
    where
    \[
    N(\boldsymbol{\theta}) := \left\{\boldsymbol{\theta} \mid \|\boldsymbol{K}_\varepsilon(\boldsymbol{\theta}(t)) - \boldsymbol{K}_\varepsilon(\boldsymbol{\theta}(0))\|_F \leq \frac{1}{2} \lambda_{\text{min}}(\vK_\varepsilon) \right\}.
    \]

    We now analyze the evolution of the loss function:
    \begin{align}
    \frac{\D L_H(\vtheta(t))}{\D t} &= \nabla_{\vtheta} L_H(\vtheta) \frac{\D \vtheta}{\D t} \notag \\
    &= -\frac{1}{n^2} \vl_\varepsilon \vD_\varepsilon \vS \vS^{\top} \vD_\varepsilon^{\top} \vl_\varepsilon^{\top} \notag \\
    &\leq -\frac{2}{n} \lambda_{\text{min}}(\vK_\varepsilon(\vtheta(t))) L(\vtheta(t)),
    \end{align}
    where we use the fact that \( \vl_\varepsilon \cdot \vl_\varepsilon^\top = 2n L_H(\vtheta(t)) \).

    Furthermore, for \( t \in [0,T] \), we have 
    \[
    \|\boldsymbol{K}_\varepsilon(\boldsymbol{\theta}(t)) - \boldsymbol{K}_\varepsilon(\boldsymbol{\theta}(0))\|_F \leq \frac{1}{2} \lambda_{\text{min}}(\vK_\varepsilon).
    \]
    This implies
    \[
    \begin{aligned}
    \lambda_{\min}(\vK_\varepsilon(\boldsymbol{\theta}(t))) &\ge \lambda_{\min}\Big(\vK_\varepsilon(\boldsymbol{\theta}(t)) - \vK_\varepsilon(\boldsymbol{\theta}(0))\Big) + \lambda_{\min}\Big(\vK_\varepsilon(\boldsymbol{\theta}(0))\Big)\\[1mm]
    &\ge \lambda_{\min}\Big(\vK_\varepsilon(\boldsymbol{\theta}(0))\Big) - \sigma_{\min}\Big(\vK_\varepsilon(\boldsymbol{\theta}(t)) - \vK_\varepsilon(\boldsymbol{\theta}(0))\Big)\\[1mm]
    &\ge \lambda_{\min}\Big(\vK_\varepsilon(\boldsymbol{\theta}(0))\Big) - \|\vK_\varepsilon(\boldsymbol{\theta}(t)) - \vK_\varepsilon(\boldsymbol{\theta}(0))\|_F\\[1mm]
    &\ge \frac{1}{2}\lambda_{\min}\Big(\vK_\varepsilon(\boldsymbol{\theta}(0))\Big).
    \end{aligned}
    \]
    Therefore, we obtain
    \begin{align}
    \frac{\D L_H(\vtheta(t))}{\D t} \leq -\frac{1}{n} \lambda_{\text{min}}(\boldsymbol{K}_\varepsilon(\boldsymbol{\theta}(0))) L_H(\vtheta(t)),
    \end{align}
    for \( t \in [0,T] \). Solving this differential inequality yields
    \begin{equation}
    L_H(\vtheta(t)) \leq L_H(\vtheta(0))\exp\left(-\frac{\lambda_{\text{min}}(\vK_{\varepsilon})}{n} t\right)
    \end{equation}
    for all \( t \in [0, T] \).

    Finally, for the inequality
    \begin{equation}
    \lambda_{\text{min}}(\vS\vS^{\top}) \lambda_{\text{min}}(\vD_{\varepsilon}\vD_{\varepsilon}^\top)\le\lambda_{\text{min}}(\vK_{\varepsilon}) \leq \lambda_{\text{min}}(\vS\vS^{\top}) \lambda_{\text{max}}(\vD_{\varepsilon}\vD_{\varepsilon}^\top),
    \end{equation}
    it follows directly from Lemma~\ref{comparelink}.
\end{proof}
\subsection{Proof of Theorem \ref{small}}
\begin{proof}[Proof of Theorem \ref{small}]
    First, we have  
    \begin{align}
        u(\varepsilon_{k+1}) &= u(\varepsilon_k) + (\varepsilon_{k+1}-\varepsilon_k)u'(\varepsilon_k) + \frac{1}{2}(\varepsilon_{k+1}-\varepsilon_k)^2u''(\xi_k) \notag \\
        &= u(\varepsilon_k) + (\varepsilon_{k+1}-\varepsilon_k)h(\varepsilon_k,u(\varepsilon_k)) + \frac{1}{2}(\varepsilon_{k+1}-\varepsilon_k)^2u''(\xi_k),
    \end{align}
    where \( \xi_k \) lies between \( \varepsilon_{k+1} \) and \( \varepsilon_k \) and depends on \( \vx \). Therefore, we obtain  
    \begin{equation}
        e(\varepsilon_{k+1}) = e(\varepsilon_k) + (\varepsilon_{k+1}-\varepsilon_k)(h(\varepsilon_k,u(\varepsilon_k)) - h(\varepsilon_k,U(\varepsilon_k))) + \frac{1}{2}(\varepsilon_{k+1}-\varepsilon_k)^2u''(\xi_k),
    \end{equation}
    where \( e(\varepsilon_k) = u(\varepsilon_k) - U(\varepsilon_k) \). Then, we have  
    \begin{align}
     &\|e(\varepsilon_{k+1})\|_{H^2(\Omega)} \notag \\
     =&\|e(\varepsilon_k)\|_{H^2(\Omega)} + (\varepsilon_{k+1}-\varepsilon_k)\|h(\varepsilon_k,u(\varepsilon_k)) - h(\varepsilon_k,U(\varepsilon_k))\|_{H^2(\Omega)} \notag \\
     &+ \frac{1}{2}(\varepsilon_{k+1}-\varepsilon_k)^2\|u''(\xi_k)\|_{H^2(\Omega)} \notag \\
     \leq& \|e(\varepsilon_k)\|_{H^2(\Omega)} + (\varepsilon_{k+1}-\varepsilon_k)P_{\varepsilon_k}\|e(\varepsilon_k)\|_{H^2(\Omega)} + \frac{1}{2} \frac{\varepsilon_0-\varepsilon_n}{n} \tau \notag \\
     \leq& \|e(\varepsilon_k)\|_{H^2(\Omega)} + P \cdot \frac{\varepsilon_0-\varepsilon_n}{n} \|e(\varepsilon_k)\|_{H^2(\Omega)} + \frac{1}{2} \frac{\varepsilon_0-\varepsilon_n}{n} \tau.
    \end{align}
    Recalling that \( e_0 = \|u (\varepsilon_0)-U(\varepsilon_0)\|_{H^2(\Omega)}\), we obtain  
    \begin{align}
    \|e(\varepsilon_{n})\|_{H^2(\Omega)} 
        &\leq e_0\left(1+P\cdot \frac{\varepsilon_0-\varepsilon_n}{n}\right)^n+\frac{\tau}{2} \frac{\varepsilon_0-\varepsilon_n}{n} \sum_{n=0}^{n-1} \left(1+P\cdot \frac{\varepsilon_0-\varepsilon_n}{n}\right)^n \notag \\
        &= e_0\left(1+P\cdot \frac{\varepsilon_0-\varepsilon_n}{n}\right)^n+\frac{\tau}{2} \frac{\left(1+P\cdot \frac{\varepsilon_0-\varepsilon_n}{n}\right)^n-1}{P} \notag \\
        &\leq \frac{\tau(e^{P(\varepsilon_0-\varepsilon_n)}-1)}{2P}+e_0e^{P(\varepsilon_0-\varepsilon_n)},
    \end{align}
    where the last step follows from the inequality  
    \begin{equation}
    (1+a)^m \leq e^{m a}, \notag
    \end{equation}
    for \( a>0 \).
\end{proof}

\begin{corollary}[Convergence of Homotopy Functions]\label{cosmall}
    Suppose the assumptions in Theorem \ref{small} hold, and \( H(\varepsilon_n, u) \) is Lipschitz continuous in \( H^2(\Omega) \), i.e.,
    \[
    \|H( u_1,\varepsilon_n) - H( u_2,\varepsilon_n)\|_{H^2(\Omega)} \leq L \|u_1 - u_2\|_{H^2(\Omega)}.
    \]
    Then, we have  
    \begin{align}
        &\|H( U(\varepsilon_n),\varepsilon_n)\|_{H^2(\Omega)} \notag\\\leq& L\left[e_0e^{P(\varepsilon_0-\varepsilon_n)}+\frac{\tau(e^{P(\varepsilon_0-\varepsilon_n)}-1)}{2P}\right] \ll 1.
    \end{align}
\end{corollary}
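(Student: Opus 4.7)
The plan is to reduce this corollary directly to Theorem~\ref{small} by exploiting the fact that the exact path satisfies $H(u(\varepsilon),\varepsilon)=0$ identically. First I would note that since $u(\varepsilon_n)$ is, by definition, the exact solution of the target equation at $\varepsilon=\varepsilon_n$, we have the identity $H(u(\varepsilon_n),\varepsilon_n)=0$ in $H^2(\Omega)$. This lets me rewrite the quantity of interest as a difference:
\begin{equation}
\|H(U(\varepsilon_n),\varepsilon_n)\|_{H^2(\Omega)} = \|H(U(\varepsilon_n),\varepsilon_n)-H(u(\varepsilon_n),\varepsilon_n)\|_{H^2(\Omega)}.\notag
\end{equation}

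Next, I would invoke the Lipschitz hypothesis on $H(\cdot,\varepsilon_n)$ to bound the right-hand side by $L\|U(\varepsilon_n)-u(\varepsilon_n)\|_{H^2(\Omega)}$. At this point, the problem is entirely reduced to controlling the trajectory error $\|u(\varepsilon_n)-U(\varepsilon_n)\|_{H^2(\Omega)}$, which is exactly what Theorem~\ref{small} provides. Substituting the bound
\begin{equation}
\|u(\varepsilon_n)-U(\varepsilon_n)\|_{H^2(\Omega)} \le e_0 e^{K(\varepsilon_0-\varepsilon_n)} + \frac{\tau(e^{K(\varepsilon_0-\varepsilon_n)}-1)}{2K}\notag
\end{equation}
yields the claimed estimate. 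The final $\ll 1$ conclusion is inherited from the smallness of $e_0$ and $\tau$ assumed in Theorem~\ref{small}, multiplied by the fixed constant $L$.

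There is essentially no obstacle here beyond checking the admissibility of each step: the main content is already carried by Theorem~\ref{small}, and the corollary is a two-line consequence obtained by converting a solution-space error into a residual-space error via the Lipschitz constant. The only point worth double-checking is that the Lipschitz inequality is applied at the single value $\varepsilon=\varepsilon_n$ (so no $\varepsilon$-uniformity is required), and that $U(\varepsilon_n)$ lies in the domain on which Lipschitz continuity is assumed, which follows because $U(\varepsilon_n)$ stays in a small $H^2$-neighborhood of $u(\varepsilon_n)$ by the trajectory bound above.
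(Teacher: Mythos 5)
Your proposal is correct and follows the same route the paper intends: the paper states only that the corollary ``follows directly from Theorem \ref{small},'' and your two-step argument (insert $H(u(\varepsilon_n),\varepsilon_n)=0$, apply the Lipschitz bound, then invoke the trajectory estimate) is precisely the spelled-out version of that claim. No gaps.
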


\begin{proof}
    The proof follows directly from the result in Theorem \ref{small}.
\end{proof}

\subsection{Discussion on \( e_0 \)}\label{e0}

In Theorem~\ref{small} and Corollary~\ref{cosmall}, one important assumption is that we assume \( e_0 \) is small. Here, we discuss why this assumption is reasonable. 

First, we use physics-informed neural networks (PINNs) to solve the following equations:
\begin{equation}
\left\{
\begin{array}{ll}
\mathcal{L}_\varepsilon u = f(u), & \text{in } \Omega, \\
\mathcal{B} u = g(x), & \text{on } \partial \Omega,
\end{array}
\right.
\label{eq:gen_pde1}
\end{equation}
where \( \mathcal{L}_\varepsilon \) is a differential operator defining the PDE with certain parameters, \( \mathcal{B} \) is an operator associated with the boundary and/or initial conditions, and \( \Omega \subseteq \mathbb{R}^d \).

The corresponding continuum loss function is given by:
\begin{align}
   L_c(\boldsymbol{\theta}) \coloneqq  \frac{1}{2} \int_\Omega \left( \mathcal{L}_\varepsilon u(\boldsymbol{x}; \boldsymbol{\theta}) - f(u) \right)^2 \D \vx
   + \frac{\lambda}{2} \int_{\partial\Omega} \left( \mathcal{B} u(\boldsymbol{x}; \boldsymbol{\theta}) - g(\boldsymbol{x}) \right)^2 \D \vx. 
   \label{loss1}
\end{align}

We assume this loss function satisfies a regularity condition:
\begin{assumption}
   Let \( u_* \) be the exact solution of Eq.~(\ref{eq:gen_pde1}). Then, there exists a constant \( C \) such that
   \begin{equation}
       \| u(\boldsymbol{x}; \boldsymbol{\theta}) - u_*(\boldsymbol{x}) \|_{H^2(\Omega)} \leq C L_c(\boldsymbol{\theta}).
   \end{equation}
\end{assumption}

The above assumption holds in many cases. For example, based on \cite{grisvard2011elliptic}, when \( \mathcal{L} \) is a linear elliptic operator with smooth coefficients, and \( f(u) \) reduces to \( f(\boldsymbol{x}) \in L^2(\Omega) \), and if \( \Omega \) is a polygonal domain (e.g., \( [0,1]^d \)), then, provided the boundary conditions are always satisfied, the assumption holds.

Therefore, we only need to ensure that \( L_c(\boldsymbol{\theta}_s) \) is sufficiently small, where \( \boldsymbol{\theta}_s \) denotes the learned parameters at convergence. Here, \( L_c(\boldsymbol{\theta}_s) \) can be divided into three sources of error: approximation error, generalization error, and training error:

\begin{align}
    \boldsymbol{\theta}_c &= \arg \min_{\boldsymbol{\theta}} L_c(\boldsymbol{\theta}) 
    = \arg \min_{\boldsymbol{\theta}} \frac{1}{2} \int_\Omega \left( \mathcal{L}_\varepsilon u(\boldsymbol{x}; \boldsymbol{\theta}) - f(u(\boldsymbol{x})) \right)^2 \D \vx
   + \frac{\lambda}{2} \int_{\partial\Omega} \left( \mathcal{B} u(\boldsymbol{x}; \boldsymbol{\theta}) - g(\boldsymbol{x}) \right)^2 \D \vx, \notag\\
    \boldsymbol{\theta}_d &= \arg \min_{\boldsymbol{\theta}} L(\boldsymbol{\theta}) 
    = \arg \min_{\boldsymbol{\theta}} \frac{1}{2n_r} \sum_{i=1}^{n_r} \left( \mathcal{L}_\varepsilon u(\boldsymbol{x}_r^i; \boldsymbol{\theta}) - f(u(\boldsymbol{x}_r^i; \boldsymbol{\theta})) \right)^2
    + \frac{\lambda}{2n_b} \sum_{j=1}^{n_b} \left( \mathcal{B} u(\boldsymbol{x}_b^j; \boldsymbol{\theta}) - g(\boldsymbol{x}_b^j) \right)^2,
\end{align}
where \( \boldsymbol{x}_r^i, \boldsymbol{x}_b^j \) are sampled points as defined in Eq.~(\ref{loss}).

The error decomposition can then be expressed as:
\begin{align}
    \mathbb{E} L_c(\boldsymbol{\theta}_s) 
    &\leq L_c(\boldsymbol{\theta}_c) + \mathbb{E} L(\boldsymbol{\theta}_c) - L_c(\boldsymbol{\theta}_c) 
    + \mathbb{E} L(\boldsymbol{\theta}_d) - \mathbb{E} L(\boldsymbol{\theta}_c) 
    + \mathbb{E} L(\boldsymbol{\theta}_s) - \mathbb{E} L(\boldsymbol{\theta}_d) 
    + \mathbb{E} L_c(\boldsymbol{\theta}_s) - \mathbb{E} L(\boldsymbol{\theta}_s) \notag \\
    &\leq \underbrace{L_c(\boldsymbol{\theta}_c)}_{\text{approximation error}}
    + \underbrace{\mathbb{E} L(\boldsymbol{\theta}_c) - L_c(\boldsymbol{\theta}_c) 
    + \mathbb{E} L_c(\boldsymbol{\theta}_s) - \mathbb{E} L(\boldsymbol{\theta}_s)}_{\text{generalization error}}
    + \underbrace{\mathbb{E} L(\boldsymbol{\theta}_s) - \mathbb{E} L(\boldsymbol{\theta}_d)}_{\text{training error}},
\end{align}where the last inequality is due to $\mathbb{E} L(\boldsymbol{\theta}_d) - \mathbb{E} L(\boldsymbol{\theta}_c) \le 0$ based on the definition of $\vtheta_d$.

The approximation error describes how closely the neural network approximates the exact solution of the PDEs. If \( f \) is a Lipschitz continuous function, \( \mathcal{L}_\varepsilon \) is Lipschitz continuous from \( W^{2,1}(\Omega) \to L^1(\Omega) \), and \( \mathcal{B} \) is Lipschitz continuous from \( L^1(\partial\Omega) \to L^1(\partial\Omega) \), with \( u(\boldsymbol{x}; \boldsymbol{\theta}), u_* \in W^{2,\infty}(\bar{\Omega}) \) and \( \partial \Omega \in C^1(\Omega) \), then we have
\begin{align}
    L_c(\boldsymbol{\theta}) &= \int_\Omega \left( \mathcal{L}_\varepsilon u(\boldsymbol{x}; \boldsymbol{\theta}) - f(u(\boldsymbol{x})) \right)^2 - \left( \mathcal{L}_\varepsilon u_* - f(u_*) \right)^2 \, d\boldsymbol{x} + \frac{\lambda}{2} \int_{\partial\Omega} \left( \mathcal{B} u(\boldsymbol{x}; \boldsymbol{\theta}) - g(\boldsymbol{x}) \right)^2 - \left( \mathcal{B} u_* - g(\boldsymbol{x}) \right)^2 \, d\boldsymbol{x} \notag\\&\leq C_1 \left( \|\mathcal{L}_\varepsilon (u(\boldsymbol{x}; \boldsymbol{\theta}) - u_*)\|_{L^1(\Omega)} + \|f( u(\boldsymbol{x}; \boldsymbol{\theta})) - f(u_*)\|_{L^1(\Omega)} \right)  + C_2 \|\mathcal{B} (u(\boldsymbol{x}; \boldsymbol{\theta}) - u_*)\|_{L^1(\partial\Omega)} \notag \\
    &\leq C_3 \|u(\boldsymbol{x}; \boldsymbol{\theta}) - u_*\|_{W^{2,1}(\Omega)} + C_4 \|u(\boldsymbol{x}; \boldsymbol{\theta}) - u_*\|_{W^{1,1}(\Omega)} \notag \\
    &\leq C \|u(\boldsymbol{x}; \boldsymbol{\theta}) - u_*\|_{W^{2,1}(\Omega)},
\end{align}
where the second inequality follows from the trace theorem \cite{evans2022partial}. Therefore, we conclude that \( L_c(\boldsymbol{\theta}) \) can be bounded by \( \|u(\boldsymbol{x}; \boldsymbol{\theta}) - u_*\|_{W^{2,1}(\Omega)} \), which has been widely studied in the context of shallow neural networks \cite{siegel2020approximation} and deep neural networks \cite{yang2023nearly}. These results show that if the number of neurons is sufficiently large, the error in this part becomes small.

For the generalization error, it arises from the fact that we have only a finite number of data points. This error can be bounded using Rademacher complexity \cite{yang2023nearly,luo2020two}, which leads to a bound of \( \mathcal{O}\left(n_r^{-\frac{1}{2}}\right) + \mathcal{O}\left(n_b^{-\frac{1}{2}}\right) \). In other words, this error term is small when the number of sample points is large.

For the training error, Theorem~\ref{compare} shows that when \( \varepsilon \) is large in certain PDEs, the loss function can decay efficiently, reducing the training error to a small value.

\section{Details on Experiments}
\label{sec:apendix_experiments}
\subsection{Overall Experiments Settings}
\textbf{Examples.} We conduct experiments on function learning case: 1D Allen-Cahn equation, 2D Allen-Cahn equation, high dimension Helmholz equation, high frequency function approximation and operator learning for Burgers' equation. These equations have been studied in previous works investigating difficulties in solving numerically; we use the formulations in \citet{xu2020variational, ZHANG2024112638,hao2019convergence} for our experiments. 

\textbf{Network Structure.} We use multilayer perceptrons (MLPs) with tanh activations and three hidden layers with width 30.
We initialize these networks with the Xavier normal initialization \cite{glorot2010understanding} and all biases equal to zero.

\textbf{Training.} We use Adam to train the neural network and we tune the learning rate by a grid search on $\{10^{-5}, 10^{-4}, 10^{-3}, 10^{-2}\}$. All iterations continue until the loss stabilizes and no longer decreases significantly. 

\textbf{Device.} We develop our experiments in PyTorch 1.12.1 \cite{paszke2019pytorch} with Python 3.9.12.
Each experiment is run on a single NVIDIA 3070Ti GPU using CUDA 11.8. As summarized in Table~\ref{tab:training_cost}, we report the training cost (per epoch and total epochs) for experiments in our paper.

\subsection{1D Allen-Cahn Equation}

Number of residual points $\nres = 200$ and number of boundary points $\nbc=2$. In this example, we use forward Euler method to numerically solve the homotopy dynamics. And $\varepsilon_{0} = 0.1$ and $\varepsilon_n = 0.01$, here we choose $\Delta \varepsilon_k = 0.001$. Here, we use strategy 1 to train the neural network.

The results for using original training for this example \cref{fig:1d_allen_cahn_origin_result}. As shown in the figure, the original training method results in a large training error, leading to poor accuracy.

\begin{figure}[htp!]
    \centering
    \includegraphics[scale=0.5]{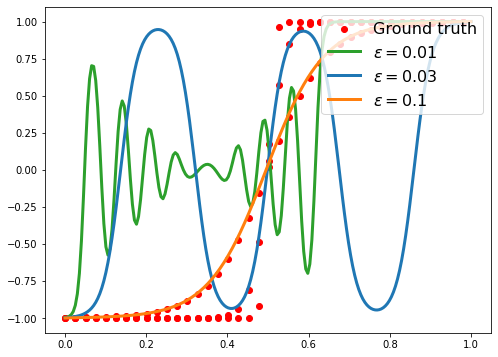}
    \caption{Solution for 1D Allen-Cahn equation for origin training.}
    \label{fig:1d_allen_cahn_origin_result}
\end{figure}

\begin{table*}[t]
    \centering
    \caption{\textbf{Training Cost for Different Examples.} 
    The table summarizes training time per epoch and total number of epochs for various PDE problems.}
    \label{tab:training_cost}
    \begin{tabular}{|c|c|c|c|c|}
        \hline
        \textbf{Example} & 1D Allen-Cahn Equation & Example 5.1 & Example 5.2 & Example 5.3 \\
        \hline
        Training Time per Epoch & 0.05s & 0.09s & 0.01s & 0.4s \\
        Total Epochs (Steps) & $1.0\times10^3$ & $4.0\times10^6$ & $4.0\times10^6$ & $2.0\times10^6$ \\
        \hline
    \end{tabular}
\end{table*}

\subsection{2D Allen-Cahn Equation}

Number of residual points $\nres = 50\times50$ and number of boundary points $\nbc=198$. For a fair comparison, all methods were implemented using the same neural network architecture, specifically a fully connected network with layer sizes $[2, 30, 30, 30, 1]$. In this example, we optimize using the Homotopy Loss. We set $s_{0} = 1.0$ and $s_n=0$, initially choosing $\Delta s= 0.1$, and later refining it to $\Delta t= 0.01$.    
When $s=0.05, \varepsilon(s) = 0.05$ we fix $\varepsilon = 0.05$ and gradually decrease $s$ to $0$. 

The reference ground truth solution is obtained using the finite difference method with $N = 1000 \times 1000$  grid points. The result is shown in Figure~\ref{fig:2d_allen_cahn_reference}.
\begin{figure}[htp!]
    \centering
    \includegraphics[scale=0.5]{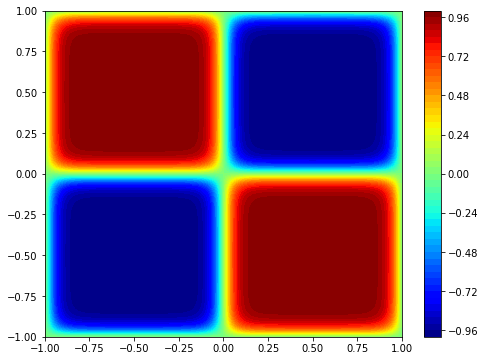}
    \caption{Reference Solution for 2D Allen-Cahn equation.}
    \label{fig:2d_allen_cahn_reference}
\end{figure}

The result obtained using PINN is shown in the Figure~\ref{fig:2d_allen_cahn_origin_result}. It is evident that the solution still deviates significantly from the ground truth solution.
\begin{figure}[htp!]
    \centering
    \includegraphics[scale=0.5]{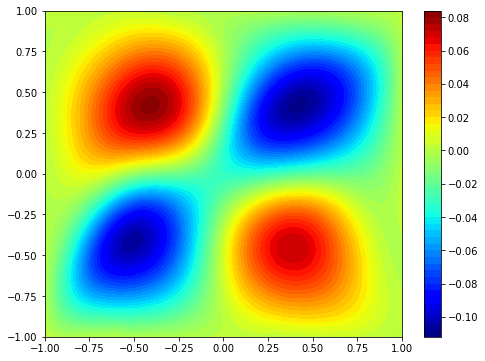}
    \caption{Solution for 2D Allen-Cahn equation for origin training.}
    \label{fig:2d_allen_cahn_origin_result}
\end{figure}

The result obtained using curriculum regularity strategy~\cite{krishnapriyan2021characterizing} is shown in the Figure~\ref{Fig.AC_CUR_Result} below, where $\Delta \varepsilon = 0.01$.

\begin{figure}[!ht]
    \centering
    \setcounter {subfigure} 0(1){
    \includegraphics[scale=0.42]{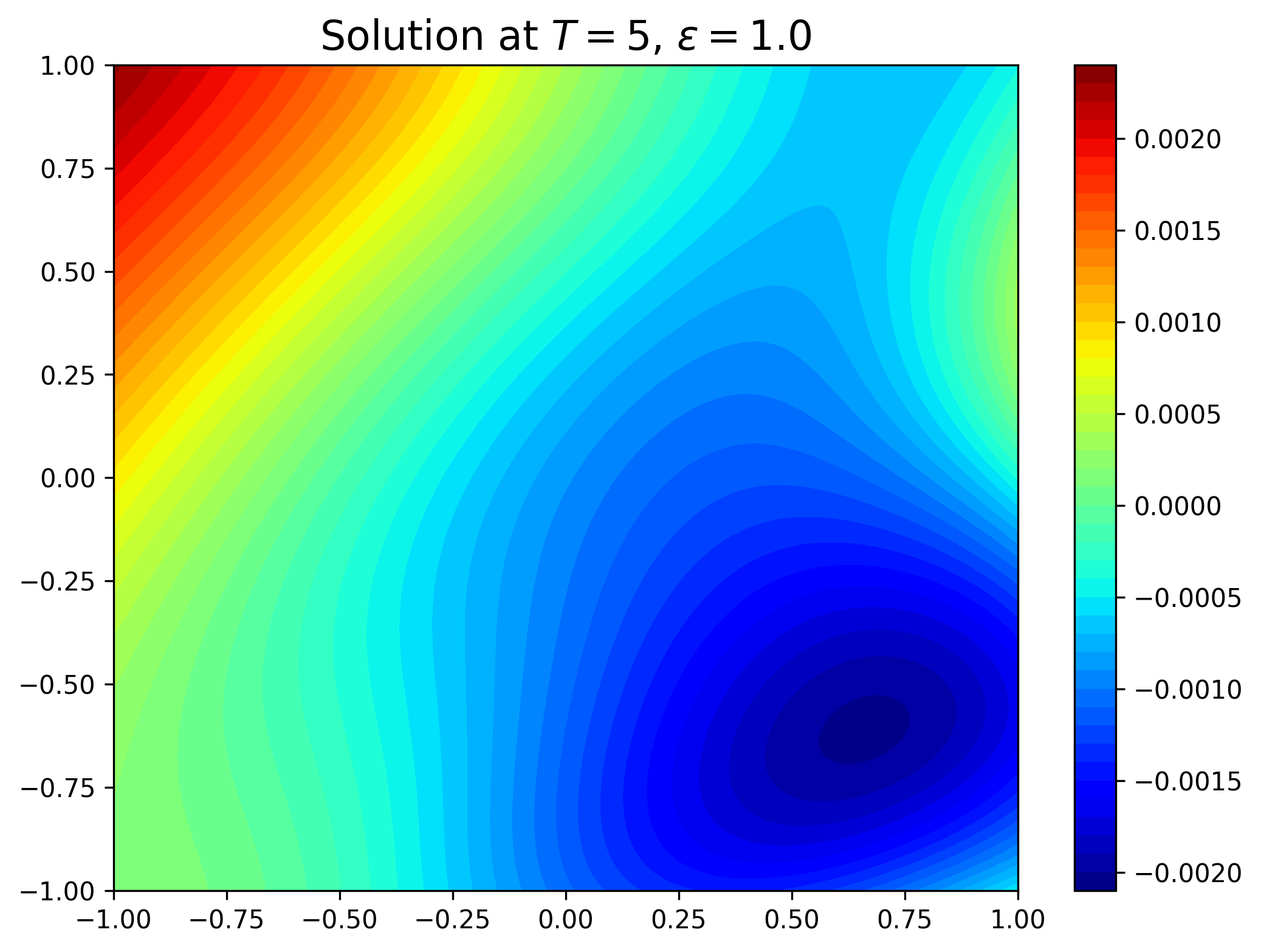}}
    \setcounter {subfigure} 0(2){
    \includegraphics[scale=0.42]{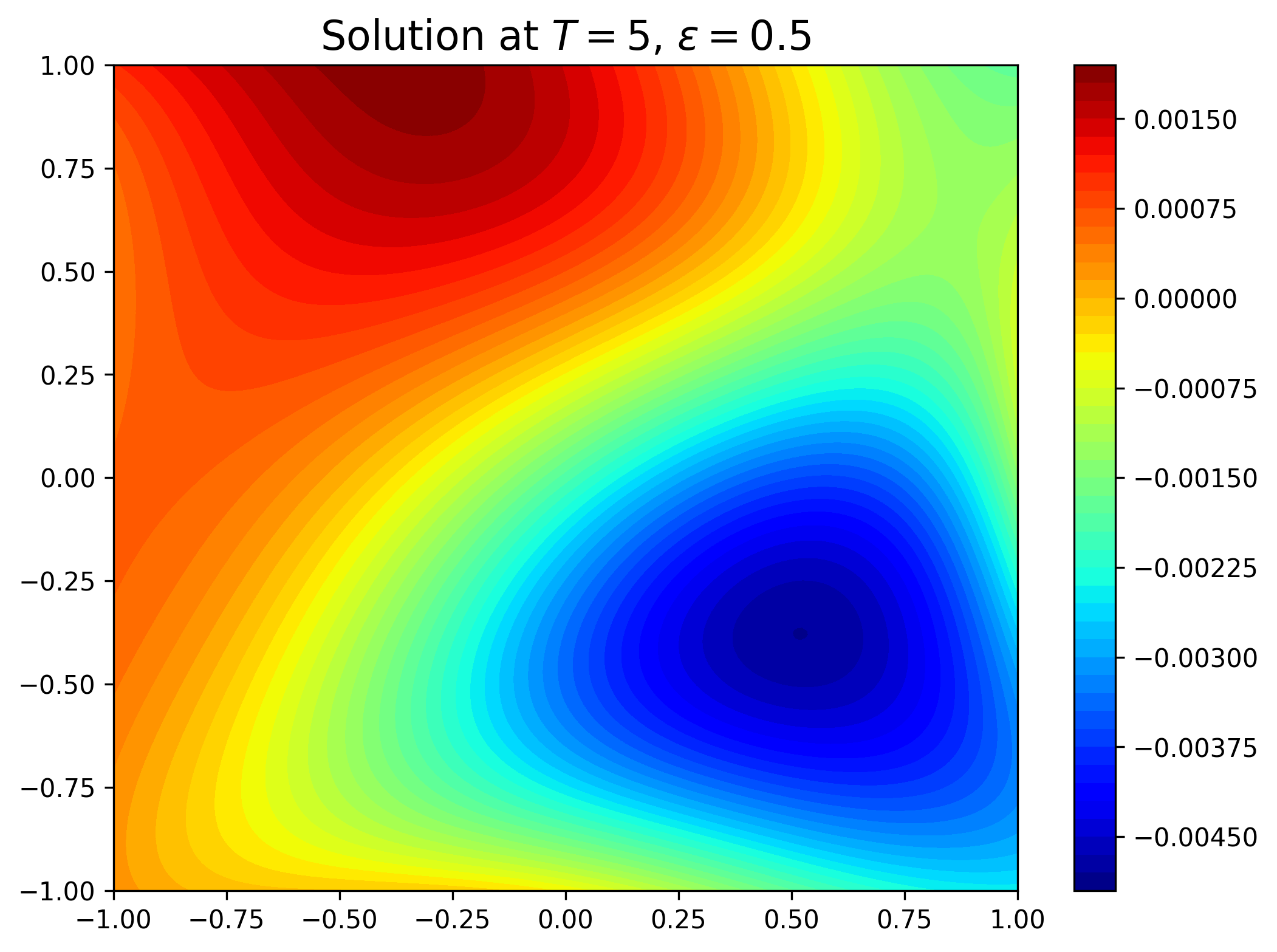}} \\
    \setcounter {subfigure} 0(3){
    \includegraphics[scale=0.42]{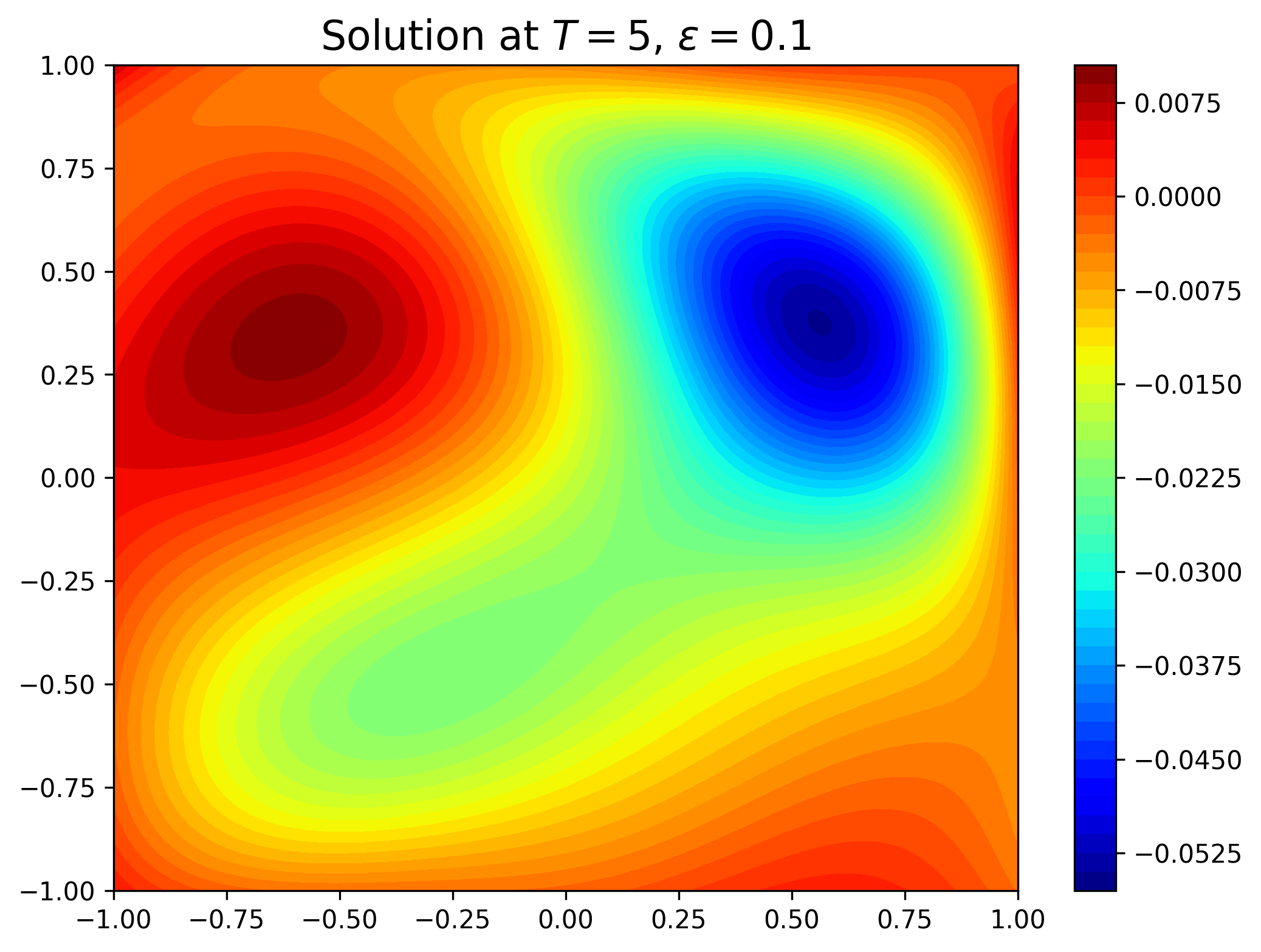}}
    \setcounter {subfigure} 0(4){
    \includegraphics[scale=0.42]{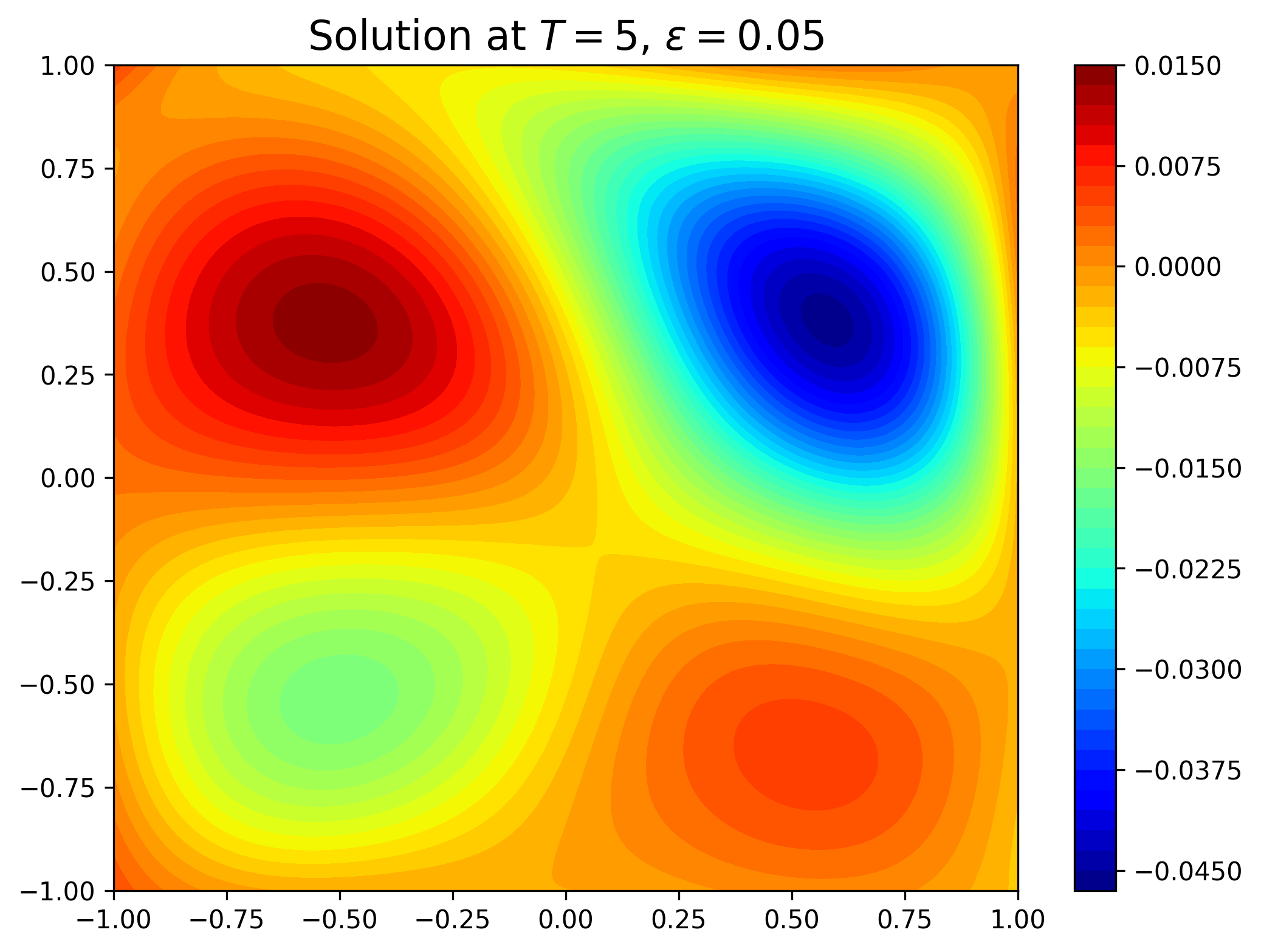}}
     \caption{Numerical results for the 2D Allen–Cahn equation using the Curriculum training strategy.}
     \label{Fig.AC_CUR_Result}
\end{figure}

The result obtained using Sequence-to-sequence training strategy~\cite{wight2020solving,mattey2022novel} is shown in the Figure~\ref{Fig.AC_Time_sequence} below, where $\Delta t = 0.01$.

\begin{figure}[!ht]
    \centering
    \setcounter {subfigure} 0(1){
    \includegraphics[scale=0.43]{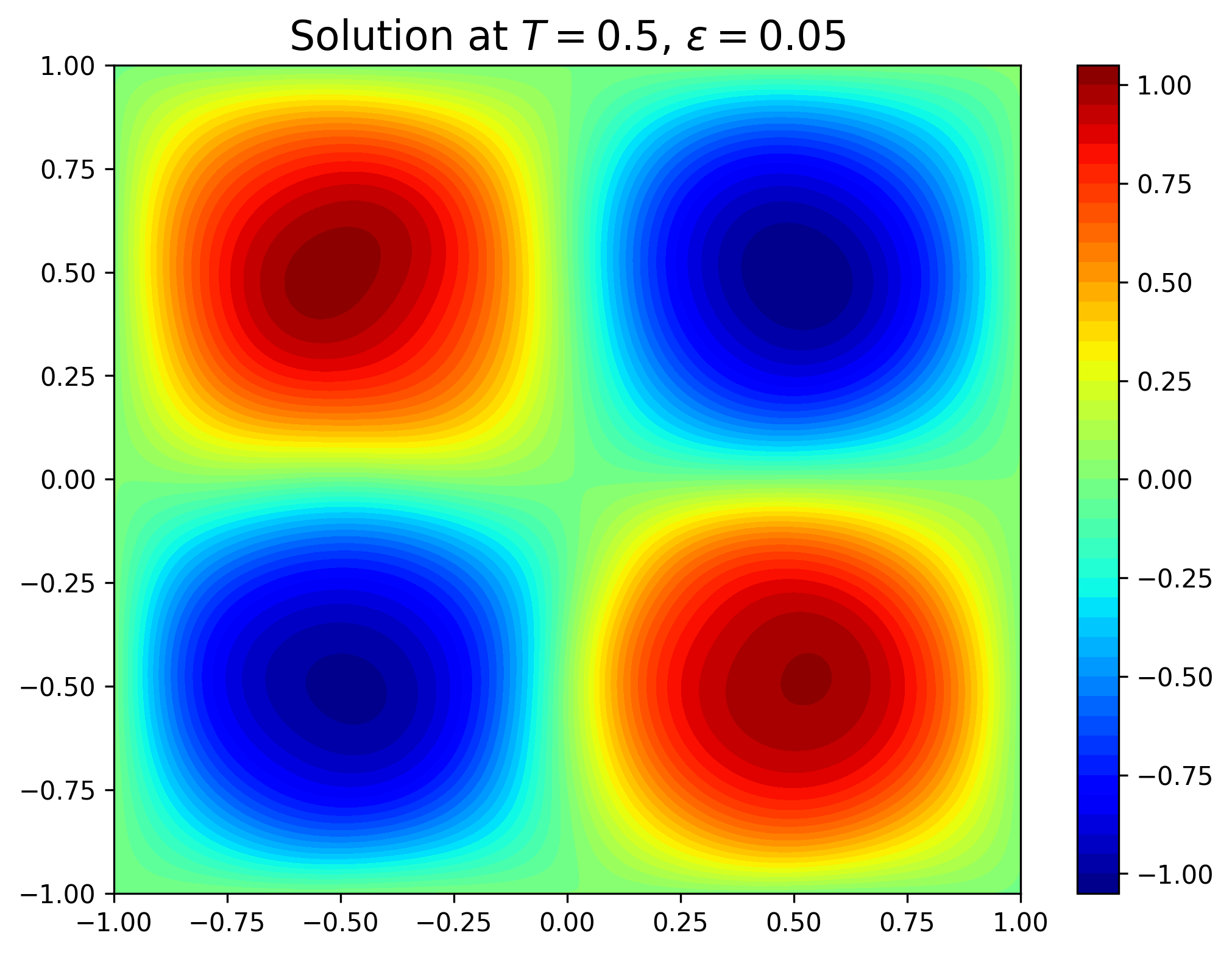}}
    \setcounter {subfigure} 0(2){
    \includegraphics[scale=0.43]{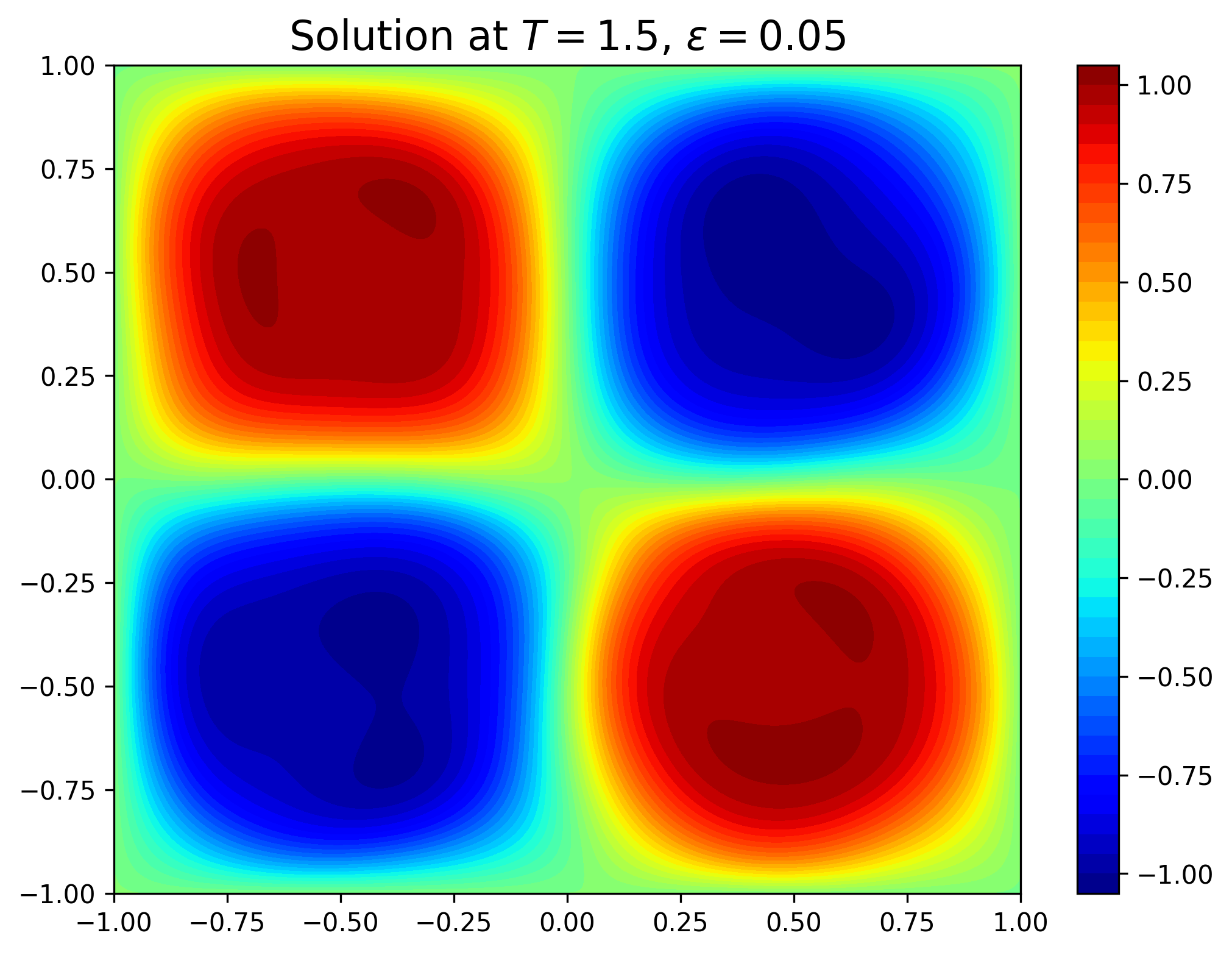}} \\
    \setcounter {subfigure} 0(3){
    \includegraphics[scale=0.43]{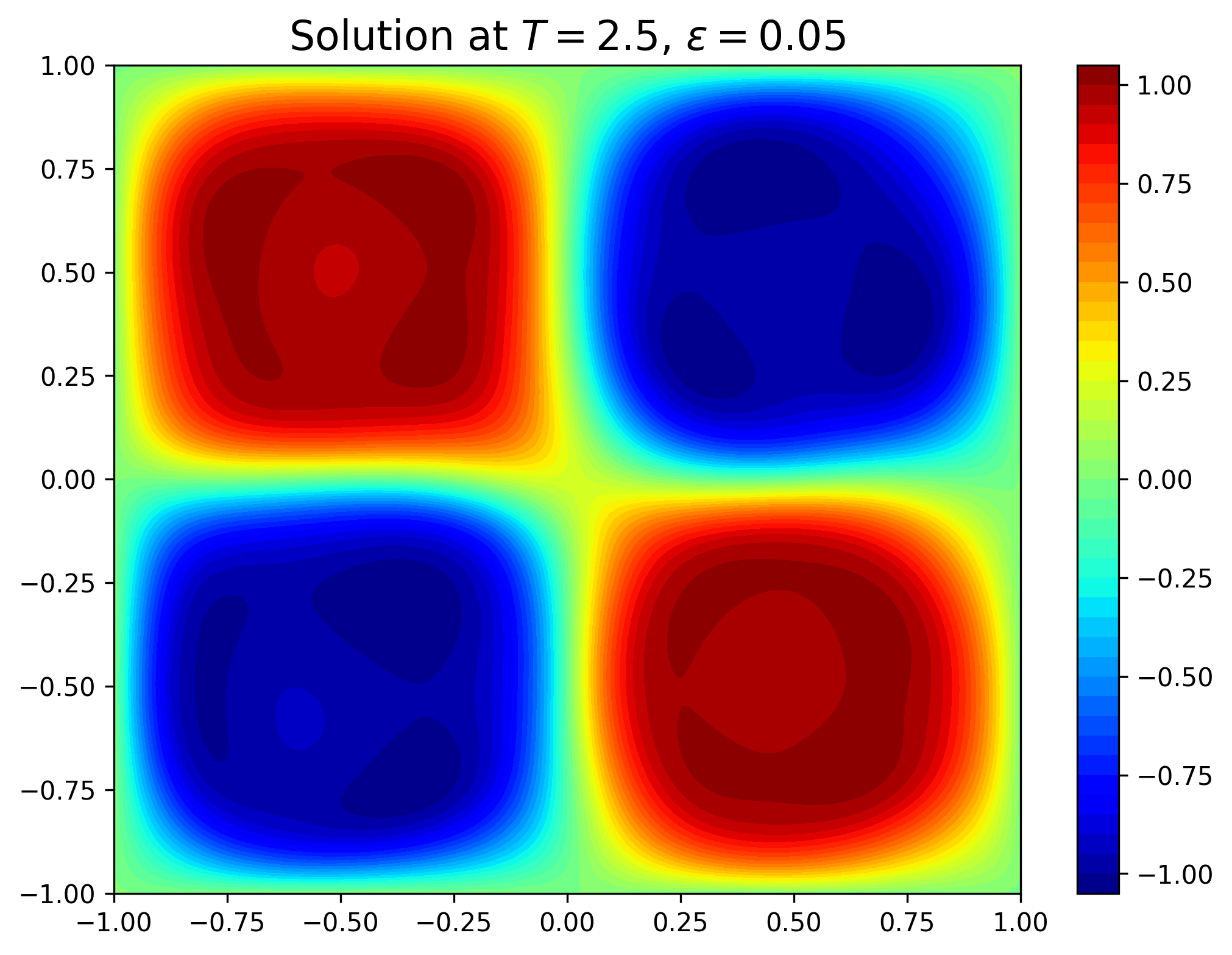}}
    \setcounter {subfigure} 0(4){
    \includegraphics[scale=0.43]{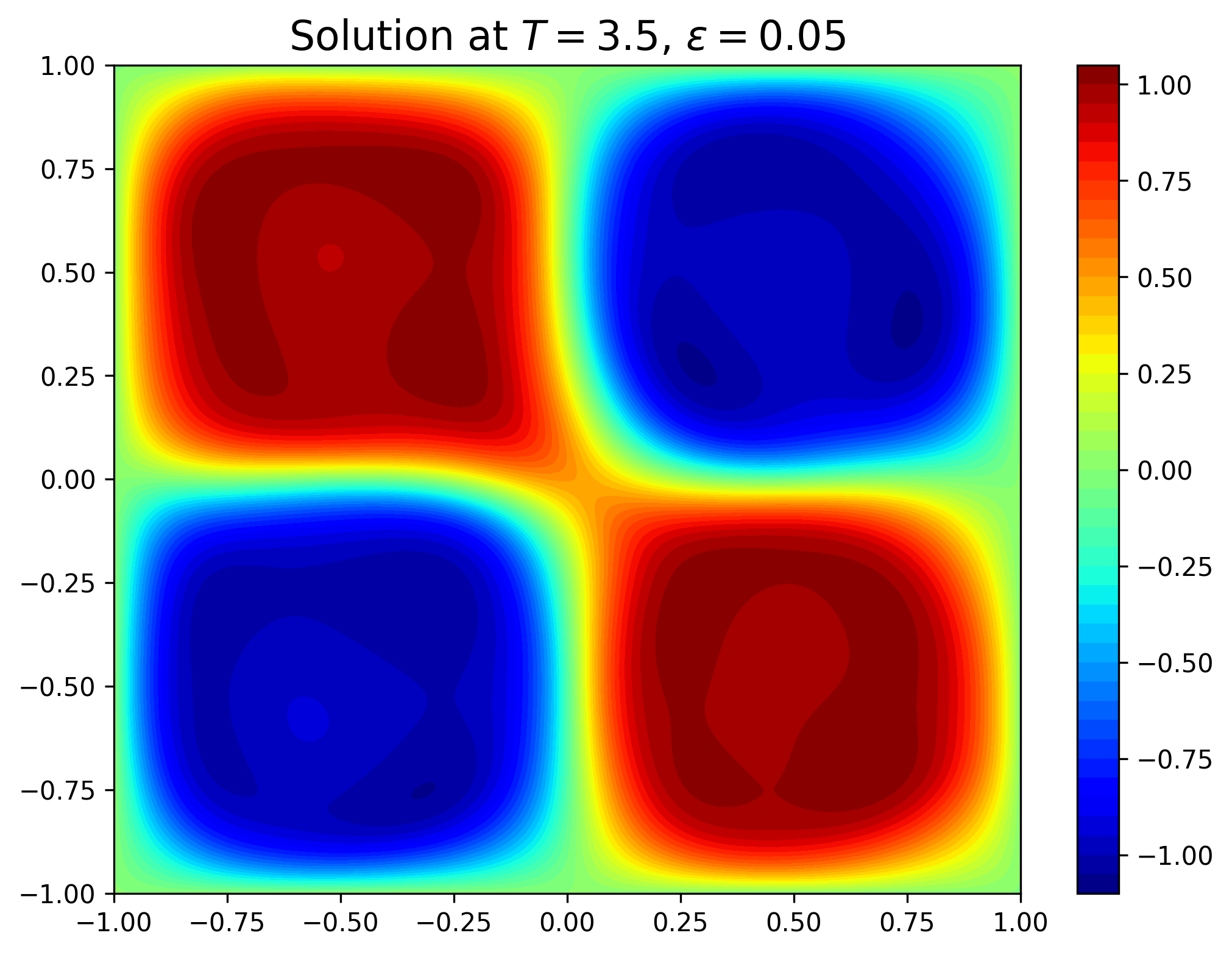}} \\
    \setcounter {subfigure} 0(5){
    \includegraphics[scale=0.43]{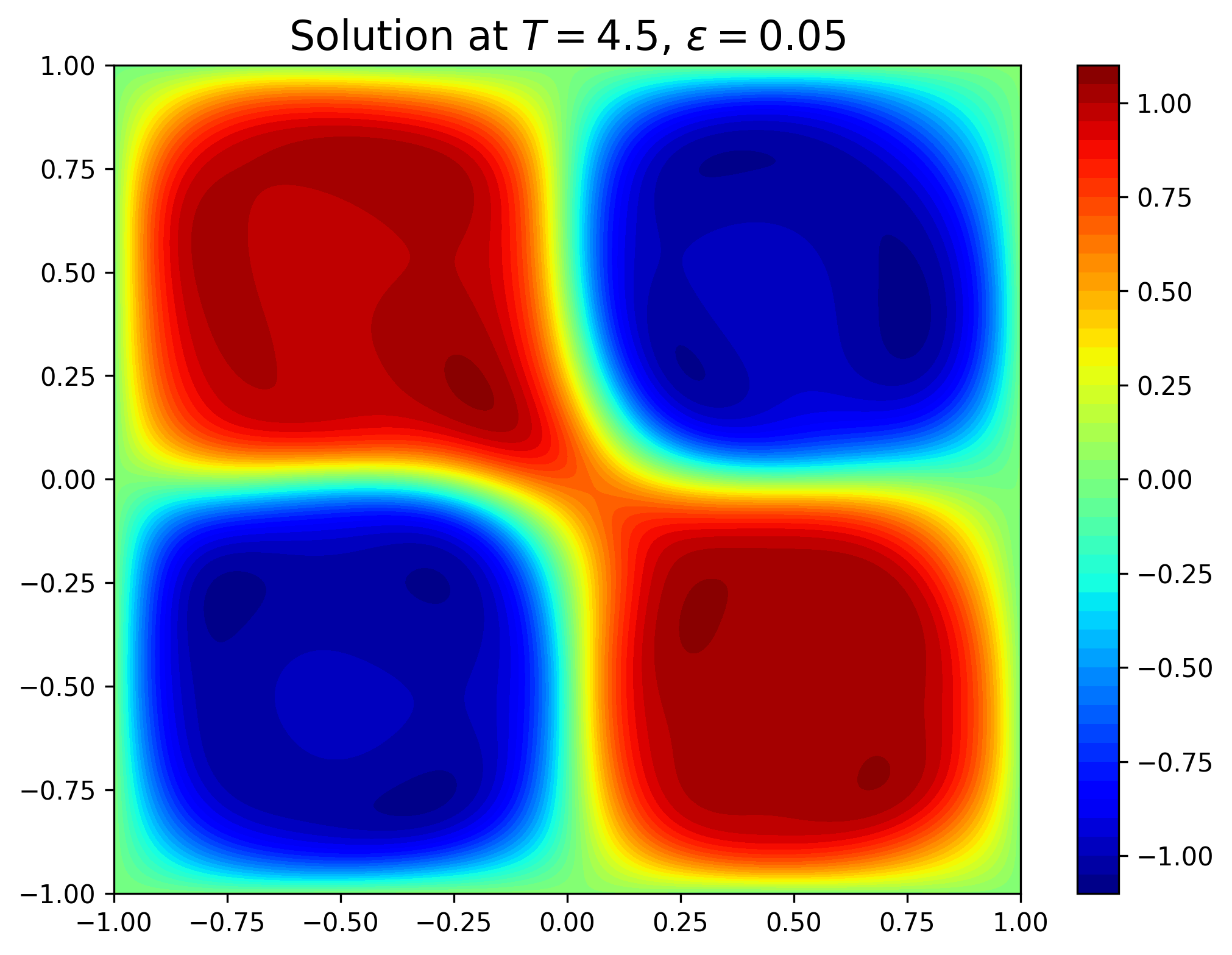}}
    \label{Fig.PINN2DHelmholtz error1}
    \setcounter {subfigure} 0(6){
    \includegraphics[scale=0.43]{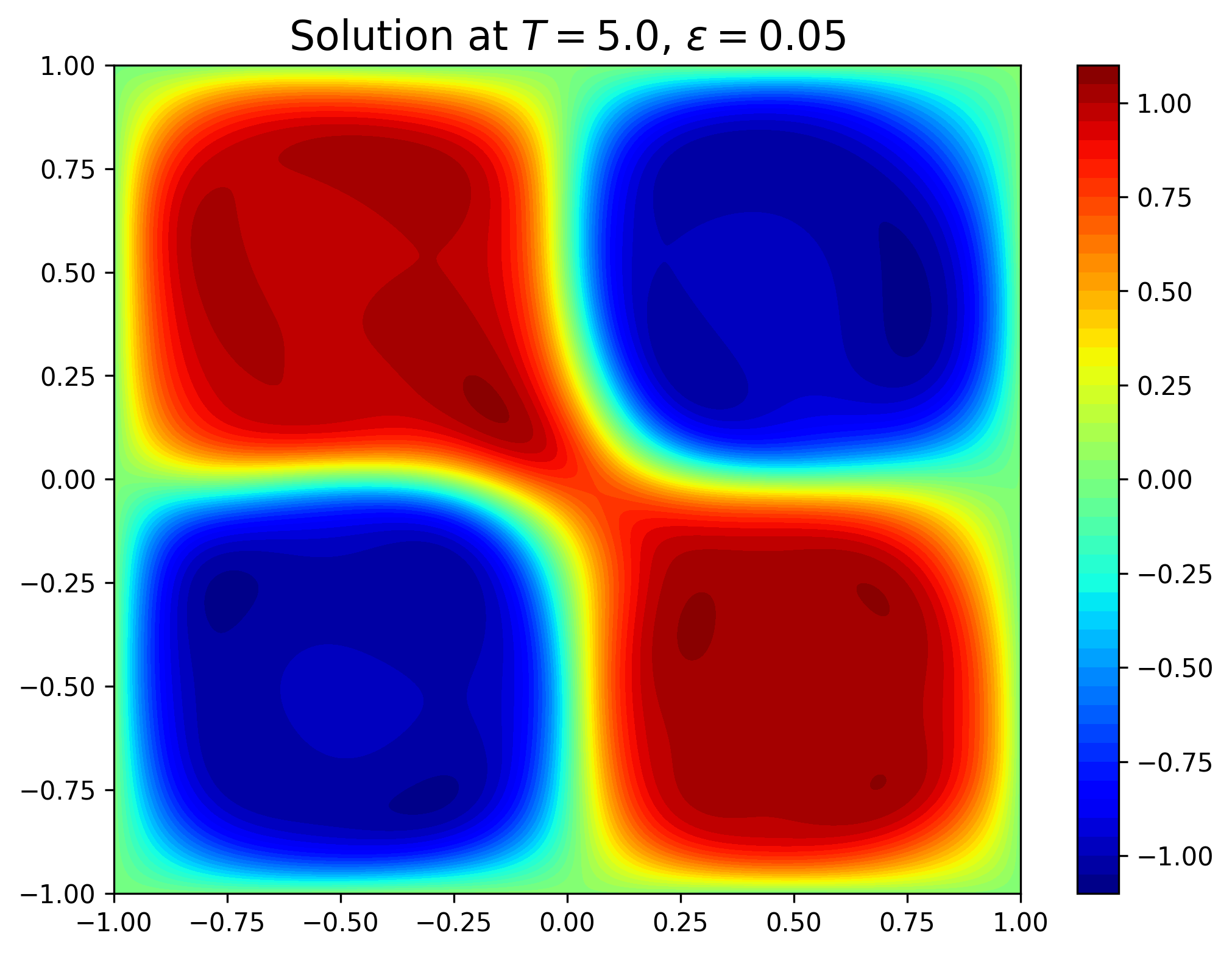}}
        \caption{Numerical results for the 2D Allen–Cahn equation using the Sequence-to-sequence training strategy.}
        \label{Fig.AC_Time_sequence}
\end{figure}

The result obtained using resampling strategy is shown in the Figure~\ref{Fig.AC_Resampling}. In all resampling strategies, additional sample points are eventually concentrated near the sharp interface region. In our comparative experiments, we start with a uniform grid of $50\times50$ sample points and augment it by adding $5000$ points in the vicinity of the sharp interface.

\begin{figure}[!ht]
    \centering
    \setcounter {subfigure} 0(1){
    \includegraphics[scale=0.43]{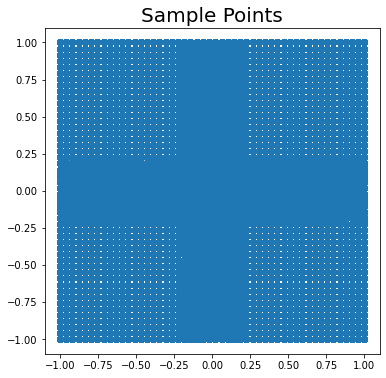}}
    \setcounter {subfigure} 0(2){
    \includegraphics[scale=0.43]{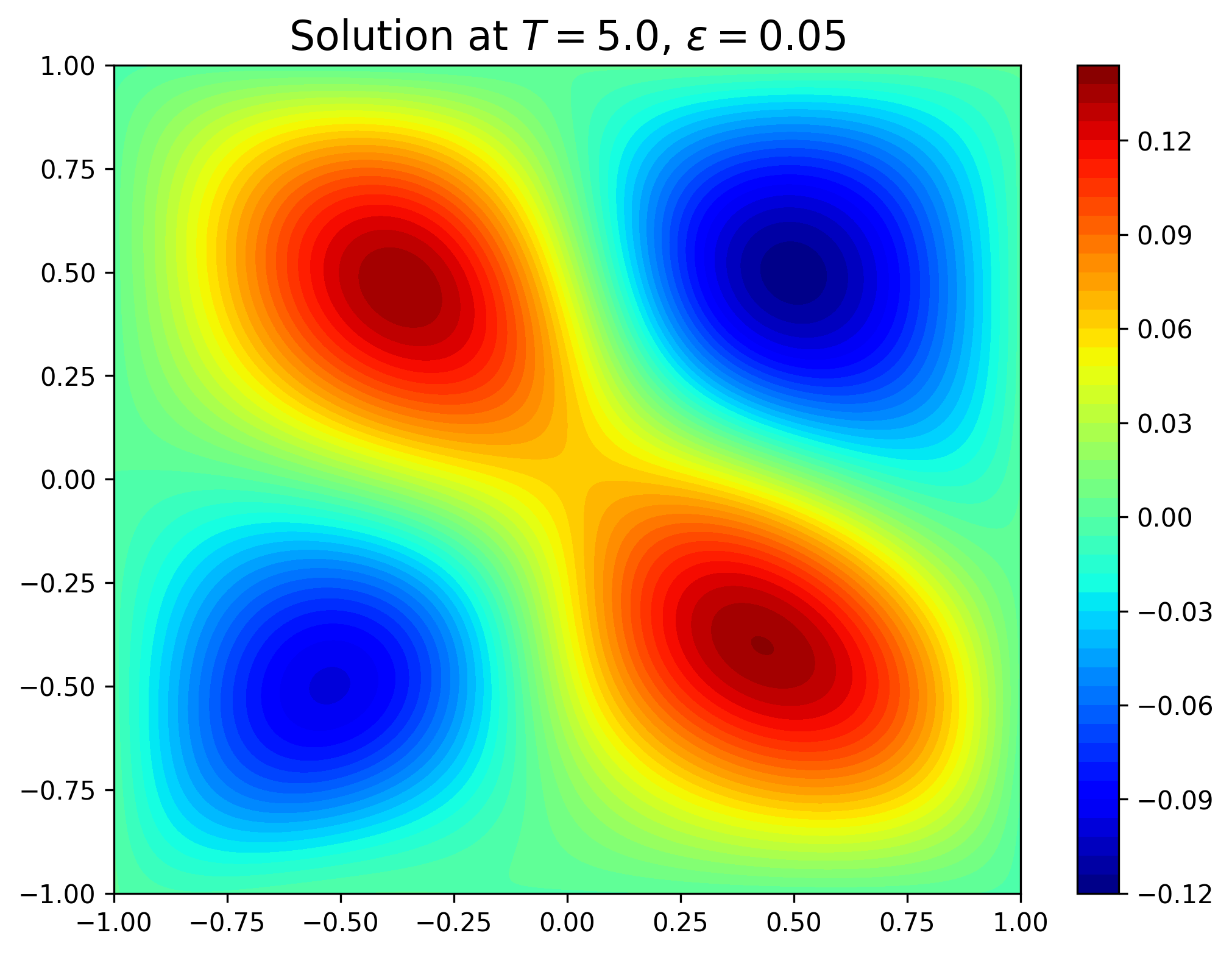}}
        \caption{Numerical results for the 2D Allen–Cahn equation using the Resampling training strategy.}
    \label{Fig.AC_Resampling}
\end{figure}

\newpage

\subsection{High Dimension Helmholtz Equation}
Number of residual points $\nres = 10000$ and number of boundary points $\nbc= 2000$. Neural network architecture is a fully connected network with layer sizes $[2, 30, 30, 30, 1]$. In this example, we optimize using the Homotopy Loss. We set $\varepsilon_{0} = 1.0$, initially choosing $\Delta \varepsilon = 0.1$, and later refining it to $\Delta \varepsilon= 0.01$ until $\varepsilon_n = \frac{1}{50}$.

\textbf{Largest eigenvalue of $\vD_\varepsilon$.} As shown in Figure~\ref{fig:1d_Helmholz_eigen_value},  a smaller $\varepsilon$ results in a smaller largest eigenvalue of \eqref{eq:discete_operator_Helmoholz}, leading to a slower convergence rate and increased difficulty in training.

\begin{equation}
   \vD_\varepsilon= -\varepsilon^2\Delta_{\text{dis}} + \frac{1}{d}\text{diag} \big(1, \dots, 1) \big.
    \label{eq:discete_operator_Helmoholz}
\end{equation}

\begin{figure}[t]
    \centering
    \includegraphics[scale=0.45]{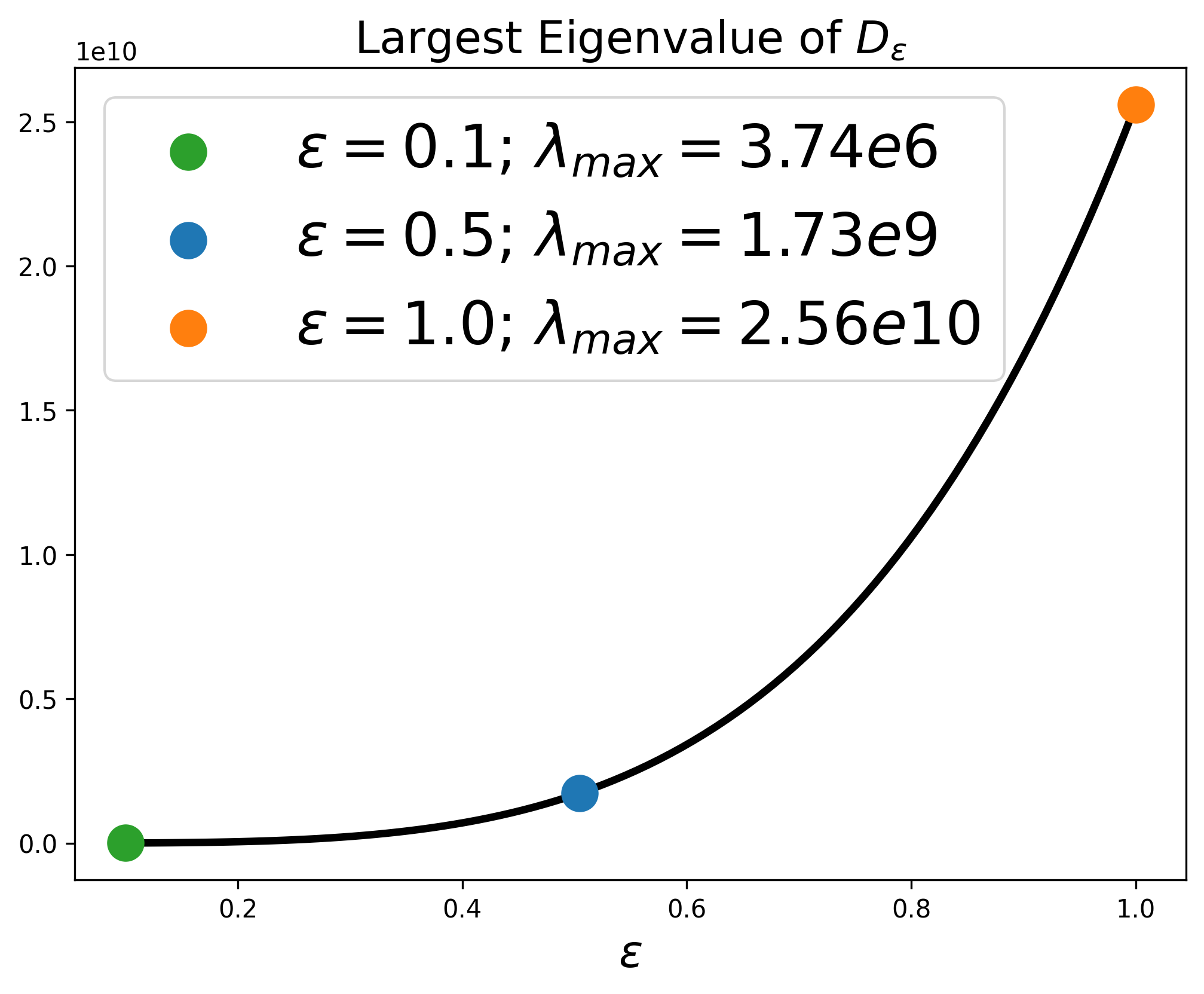}
    \caption{Largest eigenvalue of \(\vD_\varepsilon\) \eqref{eq:discete_operator_Helmoholz} for different $\varepsilon$. A smaller $\varepsilon$ results in a smaller largest eigenvalue of \eqref{eq:discete_operator_Helmoholz}, leading to a slower convergence rate and increased difficulty in training.}
    \label{fig:1d_Helmholz_eigen_value}
\end{figure}

\subsection{High Frequency Function Approximation}

We aim to approximate the following function:
$u=    \sin(50\pi x), \quad x \in [0,1].$
The homotopy is defined as $H(u,\varepsilon) = u - \sin(\frac{1}{\varepsilon}\pi x), $
where $\varepsilon \in [\frac{1}{50},\frac{1}{15}]$. Number of residual points $\nres = 300$. In this example, we optimize using the Homotopy Loss. We set $\varepsilon_{0} = \frac{1}{15}$ and $\varepsilon_n=\frac{1}{50}$, the list for $\{\varepsilon_i\}$ is $[\frac{1}{15},\frac{1}{20},\frac{1}{25},\frac{1}{30},\frac{1}{35},\frac{1}{40},\frac{1}{45},\frac{1}{50}]$. From this example, we observe that the homotopy dynamics approach can also mitigate the slow training issue caused by the Frequency Principle (F-Principle) when neural networks approximate high-frequency functions.

\begin{table}[htbp!]
    \caption{Comparison of the lowest loss achieved by the classical training and homotopy dynamics for different values of $\varepsilon$ in approximating $\sin\left(\frac{1}{\varepsilon} \pi x\right)$
    }
    \vskip 0.15in
    \centering
    \begin{tabular}{|c|c|c|c|c|} 
    \hline 
    $ $ & $\varepsilon = 1/15$ & $\varepsilon = 1/35$ & $\varepsilon = 1/50$ \\ \hline 
    Classical Loss                & 4.91e-6     & 7.21e-2     & 3.29e-1       \\ \hline 
    Homotopy Loss $L_H$                      & 1.73e-6     & 1.91e-6     & \textbf{2.82e-5}       \\ \hline
    \end{tabular}
    \label{tab:loss_approximate}
\end{table}

\begin{figure*}[htbp!]
    \centering
    \includegraphics[scale=0.4]{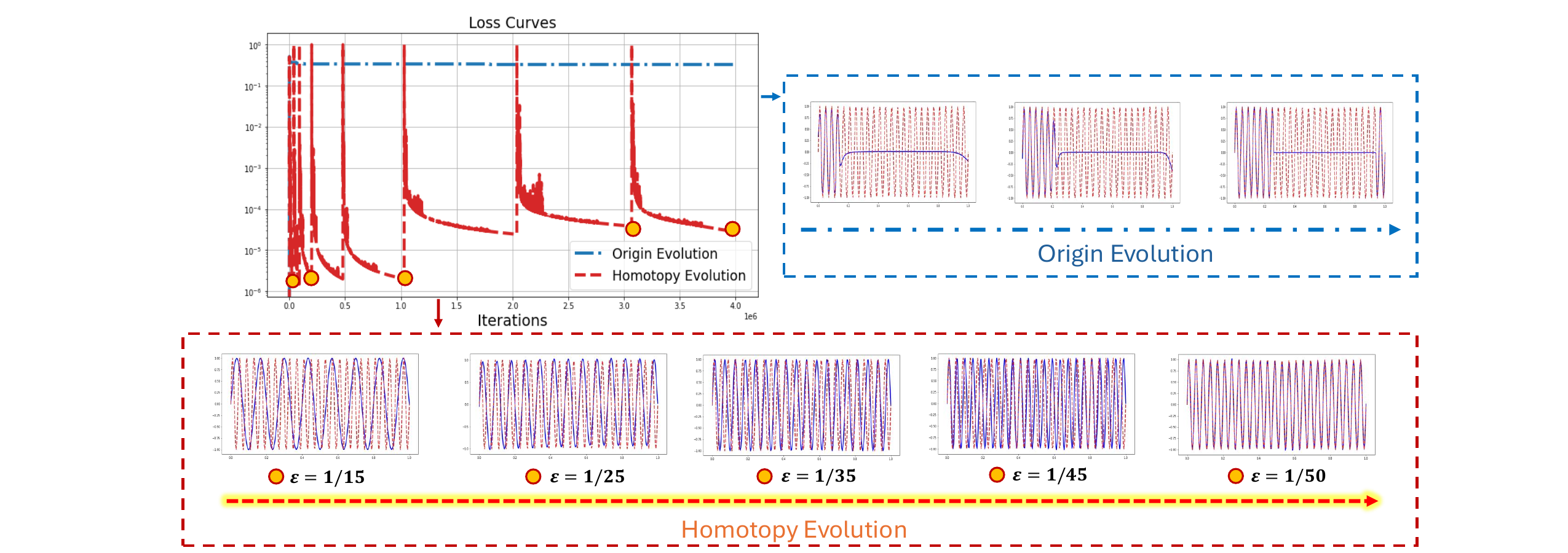}
    \caption{High-frequency function $\sin(50\pi x)$ approximation: Comparison of loss curves between original evolution and homotopy evolution. The comparison shows that homotopy evolution effectively reduces the loss, successfully approximating the high-frequency function, while the original evolution fails. The number of residual points is $\nres = 300$. }
\label{fig:high_frequency_result}
\end{figure*}

As shown in \cref{fig:high_frequency_result}, due to the F-principle \cite{xu2024overview}, training is particularly challenging when approximating high-frequency functions like $\sin(50\pi x)$. The loss decreases slowly, resulting in poor approximation performance. However, training based on homotopy dynamics significantly reduces the loss, leading to a better approximation of high-frequency functions. This demonstrates that homotopy dynamics-based training can effectively facilitate convergence when approximating high-frequency data. Additionally, we compare the loss for approximating functions with different frequencies $1/\varepsilon$ using both methods. The results, presented in \cref{tab:loss_approximate}, show that the homotopy dynamics training method consistently performs well for high-frequency functions.

\subsection{Operator Learning 1D Burgers' Equation}
\label{Ap:operator}
In this example, we apply homotopy dynamics to operator learning. The neural network architecture follows the DeepONet structure: \begin{equation}
\mathcal{G}_{\vtheta}(v)(y) = \sum_{k=1}^p \sum_{i=1}^n a_i^k \sigma\left(\sum_{j=1}^m \xi_{i j}^k v\left(x_j\right)+c_i^k\right) \sigma\left(w_k \cdot y+b_k\right).
\end{equation}

Here, $\sigma\left(w_k \cdot y+b_k\right)$ represents the trunk net, which takes the coordinates $y \in D^{\prime}$ as input, and $\sigma\left(\sum_{j=1}^m \xi_{i j}^k u\left(x_j\right)+c_i^k\right)$ represents the branch net, which takes the discretion function $v$ as input. Rigorous error bounds for DeepONet are established in \cite{lanthaler2022error,liu2024deep,yang2024deeponet}, so we omit them here. We can interpret the trunk net as the basis functions for solving PDEs. For this example, the input is $u_0$ and the output is $u_{\infty}$. We still train using the homotopy loss. It is important to emphasize that, unlike conventional operator learning, which typically follows a supervised learning strategy, our approach adopts an unsupervised learning paradigm. This makes the training process significantly more challenging. The initial condition $u_0(x)$ is generated from a Gaussian random field with a Riesz kernel, denoted by $\text{GRF} \sim 
\mathcal{N}\left(0,49^2(-\Delta+49I)^{-4}\right)$ and $\Delta$ and $I$ represent the Laplacian and the identity. We utilize a spatial resolution of $128$ grids to represent both the input and output functions.

We want to find the steady state solution for this equation and $\varepsilon = 0.05$. The homotopy is:
\begin{equation}
    H(u,s,\varepsilon) = (1-s)\left(\left(\frac{u^2}{2}\right)_x - \varepsilon(s) u_{xx} -\pi \sin (\pi x) \cos (\pi x)\right) + s(u-u_0),
\end{equation}
where $s \in [0,1]$. In particular, when $s = 1$, the initial condition $u_0$ automatically satisfies and when $s = 0$ becomes the steady state problem. And $\varepsilon(s)$ can be set to

\begin{equation}
\varepsilon(s) = 
\left\{\begin{array}{l}
s, \quad s \in [0.05,1],\\
0.05 \quad s\in [0,0.05].
\end{array}\right.\label{eq:epsilon_t}
\end{equation}

Here, $\varepsilon(s)$ varies with $s$ during the first half of the evolution. Once $\varepsilon(s)$ reaches $0.05$, it is fixed at $\varepsilon(s) = 0.05$, and only $s$ continues to evolve toward $0$.

\textbf{Largest eigenvalue of $\vD_\varepsilon$.} As shown in Figure~\ref{fig:1d_Burgers_eigen_value},  a smaller $\varepsilon$ results in a smaller largest eigenvalue of \eqref{eq:discete_operator_Burger}, leading to a slower convergence rate and increased difficulty in training.

\begin{equation}
   \vD_\varepsilon= -\varepsilon^2\Delta_{\text{dis}} + \text{diag} \left(u(\vx_1)\frac{\D}{\D x}_{\text{dis}}+\frac{\D u(\vx_1)}{\D x}, \dots, u(\vx_n)\frac{\D}{\D x}_{\text{dis}}+\frac{\D u(\vx_n)}{\D x} \right).
    \label{eq:discete_operator_Burger}
\end{equation}

\begin{figure}[t]
    \centering
    \includegraphics[scale=0.48]{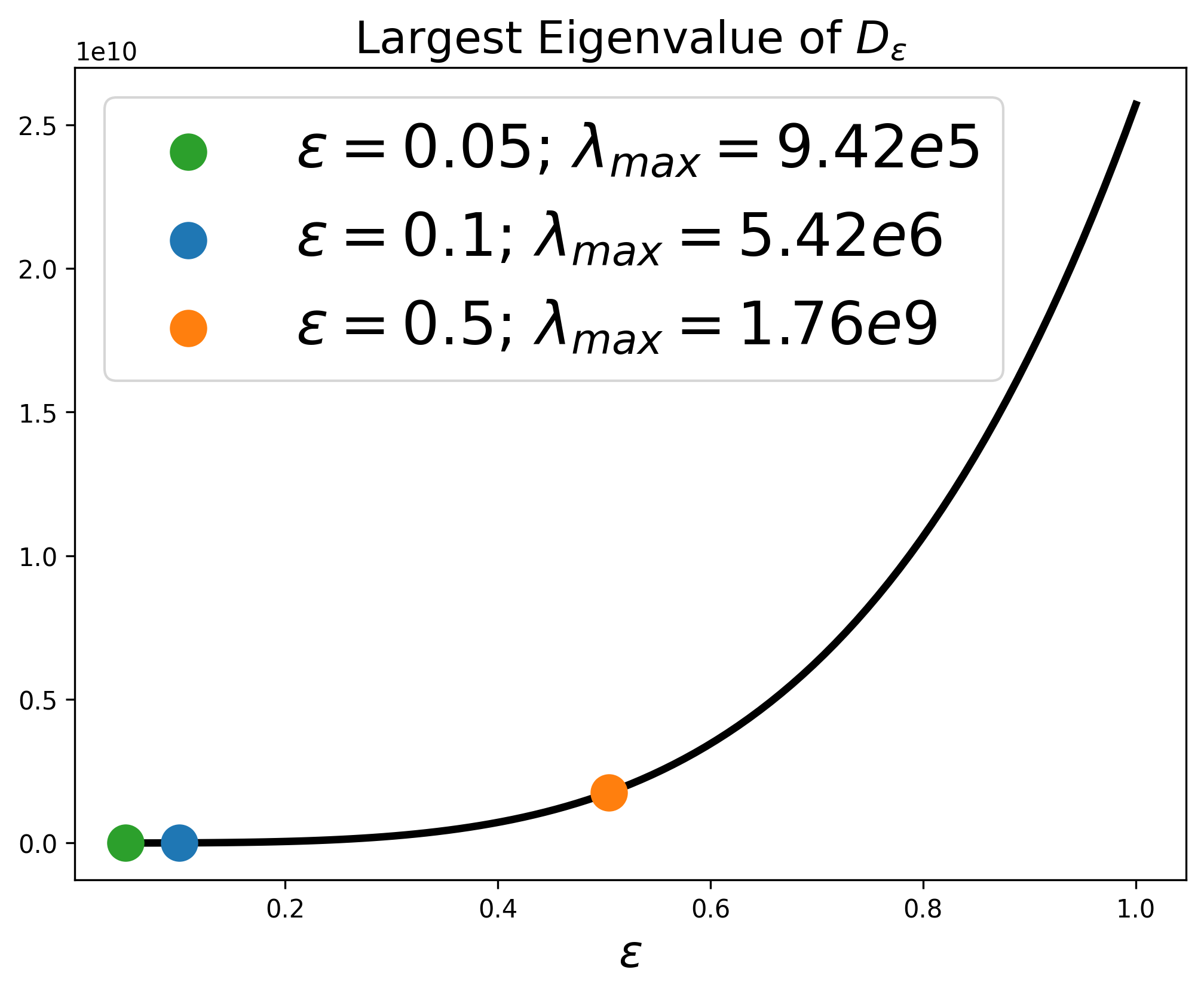}
    \caption{Largest eigenvalue of \(\vD_\varepsilon\) \eqref{eq:discete_operator_Burger} for different $\varepsilon$. A smaller $\varepsilon$ results in a smaller largest eigenvalue of \eqref{eq:discete_operator_Burger}, leading to a slower convergence rate and increased difficulty in training.}
    \label{fig:1d_Burgers_eigen_value}
\end{figure}

\textbf{Compare with tradition method.}

The table reports both inference time and accuracy metrics across varying $\varepsilon$. 
    While FDM achieves high accuracy, its computational cost increases significantly for small $\varepsilon$ due to CFL constraints. Moreover, its accuracy deteriorates under small $\varepsilon$, possibly due to resolution limitations.
    In contrast, our DeepONet model yields substantially faster inference with only moderate accuracy degradation, making it well-suited for many-query scenarios such as uncertainty quantification or real-time control.

\begin{table*}[t]
    \centering
    \caption{\textbf{Comparison of accuracy and efficiency between Finite Difference Method (FDM) and DeepONet (trained via Homotopy Dynamics).} }
    \label{tab:fdm_vs_deeponet}
    \vspace{0.2em}
    \resizebox{\textwidth}{!}{%
    \begin{tabular}{|c|c|c|c|c|c||c|c|c|}
        \hline
        $\varepsilon$ & $\Delta t$ 
        & \multicolumn{4}{c||}{\textbf{Finite Difference Method (FDM)}} 
        & \multicolumn{3}{c|}{\textbf{DeepONet (trained by Homotopy)}} \\
        \cline{3-9}
        & 
        & L2RE & MSE ($x_s$) & Comp. Time (s) & Loss $L_H$ 
        & L2RE & MSE ($x_s$) & Inference Time (s) \\
        \hline
        0.5  & $5\times10^{-5}$ & 1.63e-12 & 7.35e-13 & 239.98  & 7.55e-7 & 1.50e-3 & 1.75e-8 & 0.2 \\
        0.1  & $1\times10^{-5}$ & 5.83e-4  & 1.57e-5  & 1239.77 & 3.40e-7 & 7.00e-4 & 9.14e-8 & 0.2 \\
        0.05 & $5\times10^{-6}$ & 1.01e-2  & 4.20e-3  & 2416.23 & 7.77e-7 & 2.52e-2 & 1.20e-3 & 0.2 \\
        \hline
    \end{tabular}%
    }
    \label{tab:fdm_deeponet_burgers}
\end{table*}




\end{document}